\tikzset{node distance=1.5cm, auto}
\newtheorem{theorem}{Theorem}
\newtheorem{lemma}{Lemma}
\newtheorem{remark}{Remark}
\newcommand{\R}{\mathbb R}
\def\M{{\mathcal M}}
\def\SO{{\mathrm{SO}}}
\def\O{{\mathrm{O}}}
\def\GL{{\mathrm{GL}}}
\DeclareMathOperator{\diag}{\mathrm{diag}}
\DeclareMathOperator{\supp}{\mathrm{supp}}
\DeclareMathOperator{\argmin}{\mathrm{argmin}}
\DeclareMathOperator{\argmax}{\mathrm{argmax}}
\DeclareMathOperator{\E}{\mathrm{E}}
\DeclarePairedDelimiterX{\norm}[1]{\lVert}{\rVert}{#1}
\DeclarePairedDelimiterX{\set}[1]{\{}{\}}{#1}
\DeclarePairedDelimiterX{\ip}[1]{<}{>}{#1}
\DeclareMathOperator{\Exp}{Exp}
\DeclareMathOperator{\Log}{Log}
\newcommand{\Cc}{{\mathcal C}}
\newcommand{\din}{{d_{\mathrm{in}}}}
\newcommand{\dout}{{d_{\mathrm{out}}}}
\begin{document}
%
\title{Horizontal Flows and Manifold Stochastics in Geometric Deep Learning}
%
%
%
%

\author{Stefan~Sommer 
and Alex~Bronstein
\IEEEcompsocitemizethanks{\IEEEcompsocthanksitem S. Sommer is with the Department
of Computer Science, University of Copenhagen, Copenhagen,
Denmark.\protect\\
E-mail: sommer@di.ku.dk
\IEEEcompsocthanksitem Alex Bronstein is with the Department of Computer Science,
Technion – Israel Institute of Technology, Haifa, Israel.
}
}

\IEEEtitleabstractindextext{%
\begin{abstract}
We introduce two constructions in geometric deep learning for 1) transporting orientation-dependent convolutional filters over a manifold in a continuous way and thereby defining a convolution operator that naturally incorporates the rotational effect of holonomy; and 2) allowing efficient evaluation of manifold convolution layers by sampling manifold valued random variables that center around a weighted diffusion mean. Both methods are inspired by stochastics on manifolds and geometric statistics, and provide examples of how stochastic methods -- here horizontal frame bundle flows and non-linear bridge sampling schemes, can be used in geometric deep learning. We outline the theoretical foundation of the two methods, discuss their relation to Euclidean deep networks and existing methodology in geometric deep learning, and establish important properties of the proposed constructions.
\end{abstract}

\begin{IEEEkeywords}
geometric deep learning, stochastic analysis on manifolds, geometric statistics, frame bundle, curvature, bridge sampling.
\end{IEEEkeywords}}

\maketitle

\IEEEdisplaynontitleabstractindextext

%
\IEEEpeerreviewmaketitle

\IEEEraisesectionheading{\section{Introduction}\label{sec:introduction}}

%
%
%
%

Geometric deep learning \cite{bronstein_geometric_2017} concerns the generalization of deep neural network methodology to geometric domains. Focusing on convolutional networks, the complexity in such a generalization appears both in the case where the \emph{domain} of the input signal is non-Euclidean, e.g. a manifold or a graph, and in the case where the \emph{target} of the neural network has geometric structure. A major difficulty in the first case is the fact that translation invariance of the Euclidean convolution operator does not have a direct manifold equivalent: The topology of the geometric space often prevents a continuous transport of the orientation of a filter, and the holonomy of a curved manifold prevents a notion of parallel translation that is independent of the path between points. In particular, parallel translation along minimizing geodesics is not continuous when moving points across the cut locus. In the second case, the weighted Fr\'echet mean has been proposed as a generalization of the Euclidean convolution to produce manifold valued output. Here, a practical concern is the computational complexity involved in computing the Fr\'echet mean on general manifolds.

\subsection{Motivation and contributions}
In this paper, motivated by the difficulty in transporting orientations in a continuous way and by the computational complexity of the optimization steps needed for computing the Fr\'echet mean, we derive two constructions that seek to provide new perspectives on these challenges. In this lies an investigation of the effect of curvature when combining convolution layers. The paper thereby presents the following contributions:

First, we build on the idea of orientation functions \cite{poulenard_multi-directional_2018} and the use of gauges \cite{cohen_gauge_2019} to show how curvature affects orientations as they are transported backwards through the layers of a multilayer network. The result is a time-discrete horizontal flow in the bundle $OM$ of orthonormal frames of the tangent bundle $TM$. In relation to gauge equivariant networks \cite{cohen_gauge_2019}, the focus here is on the coupling between \emph{transport} of directions and curvature as opposed to equivariance of the convolution operation to gauge transformations. 

Secondly, we use the frame bundle and a connection to show how a notion of global parallel transport that circumvents the complexities of nontrivial topology and curvature can be constructed. The idea builds on the Eells-Elworthy-Malliavin construction of Brownian motion \cite{elworthy_geometric_1988} that uses horizontal frame bundle flows to construct the Brownian motion on nonlinear manifolds. In the frame bundle, the process results in a distribution of orientations over each point of $M$, and we use such distributions to construct a convolution operator that transports filters globally over the manifold. The construction is geometrically natural in avoiding linearization to a single tangent space. We build on this idea to construct multilayer convolutions using the anti-development of the Brownian motion, resulting in a construction that is both equivariant to frame (gauge) changes, and has a smooth integrand when $M$ is analytic.

Thirdly, we combine convolution using the weighted Fr\'echet mean \cites{pennec_riemannian_2006,chakraborty_manifoldnet_2020,chakraborty_deep_2019} with the notion of diffusion means on manifolds: center points of a Brownian motion that maximizes the likelihood of a set of manifold valued data. While the weighted Fr\'echet mean (wFM) can be efficiently computed on manifolds when closed form expressions for geodesics is available, it is computationally more demanding to compute it on general manifolds. We generalize the diffusion mean (DM) to a weighted diffusion mean (wDM), and subsequently employ methods from stochastic bridge sampling to \emph{sample} from a distribution centered at the wDM. This removes the need for expensive iterative optimization. We briefly relate the inherent stochasticity in the construction to other stochastic neural network models.

\subsection{Structure of the paper}
The paper starts with a brief account of the fiber bundle geometry we apply in the remainder of the paper. We then investigate the effect of curvature when combining multiple convolutions that each transport orientations along single geodesics. We then turn to the constructions targeting the two cases of manifold domain and manifold target, respectively. The paper ends with conclusion and outlook.

The aim of the paper is to introduce methods from fibre bundle geometry and stochastics on manifolds to the geometric deep learning community from a theoretical viewpoint. We leave actual experimental validation of the methodology to future work.
While we focus on continuous manifold geometries, the methods are applicable as well to discrete geometries using discrete connections and parallel transport.

\section{Fiber bundle geometry, parallel transport and horizontality}
\label{sec:geometry}
We here outline the fiber bundle geometry that we will use in the remainder of the paper. 
Particularly, we define the frame bundle $FM$, its subbundle $OM$, and we discuss the relation between horizontality and parallel transport.
Further details on frame bundles as used here can for example be found in the books \cites{hsu_stochastic_2002,kolar_natural_1993}. The frame bundle has been used in the context of geometric deep learning before, e.g. in \cite{cohen_gauge_2019}. Here we use the frame bundle as well, however, we focus on flows in the frame bundle, i.e. the \emph{transport} of frames. Figure~\ref{fig:FM} shows the maps between the bundles used below, and Table~\ref{tbl:symbols} a list of symbols. 

\begin{center}
\begin{table}[h]
\begin{tabular}{ll}
  Symbol & Description \\
  \hline 
  $M$ & differentiable manifold \\
  $d$ & dimension of $M$ \\
  $g$ & Riemannian metric \\
  $FM$ & frame bundle of $M$ \\
  $OM$ & orthonormal frame bundle of $M$ \\
  $\pi$ & bundle map $\pi:FM\to M$ \\
  $P$, $P_{\gamma(x,u)}$ & parallel transport (along geodesic $\gamma(x,u)$) \\
  $TFM$ & tangent bundle of $FM$ \\
  $VFM$, $HFM$ & vertical, horizontal subbundles of $TFM$ \\
  $h_u$ & horizontal lift $h_u:T_{\pi(u)}M\to H_uFM$ \\
  $H_i$ & $ith$ horizontal vector field $H_i(u)=h_u(u_i)$ \\
  $\Cc$ & connection on $M$ \\
  $R$ & (Riemannian) curvature tensor \\
  $U_t$ & $OM$-valued stochastic process \\
  $W_t$ & $\R^d$-valued Brownian motion \\
\end{tabular}
\caption{List of symbols.
}
\label{tbl:symbols}
\end{table}
\end{center}

The frame bundle $FM$ of a manifold $M$ is the fiber bundle $\pi:FM\to M$ over $M$ where each element $u\in FM$ is an ordered basis for the tangent space $T_xM$, $x=\pi(u)$. The map $\pi$ attaches the base point $x=\pi(u)$ in $M$ to each element $u\in FM$. If $M$ has dimension $d$, the frame $u$ is then a $d$-tuple $(u_1,\ldots,u_d)$ of tangent vectors $u_i$ that each are elements of the tangent space $T_xM$ and that in combination constitute a basis for $T_xM$. A vector $\mathbf v\in\R^d$ can be multiplied on this basis giving an invertible linear map $\R^d\to T_xM:\mathbf v\to \sum_{i=1}^du_i\mathbf v^i$. The resulting vector in $T_xM$ is concisely written $u\mathbf v$ in accordance with the standard notation for multiplication $A\mathbf v$ of a vector $\mathbf v$ on a linear map $A$. In the following, we use boldface for \emph{Euclidean} vectors.

When $M$ is Riemannian, the subbundle of the frame bundle $FM$ consisting of \emph{orthonormal} frames is denoted the orthonormal frame bundle $OM$. Orthonormality is defined with respect to the Riemannian metric $g$ such that $g(u_i,u_j)=\delta_{ij}$, $i,j=1,\dots,d$ when $u\in OM$. Each frame $u\in FM$ provides an invertible linear map $\R^d\to T_xM:\mathbf v\to \sum_{i=1}^du_i\mathbf v^i$, and $FM$ and $OM$ can therefore be viewed as the principal fiber bundles $\GL(\R^d,TM)$, and $\O(\R^d,TM)$. $\GL(d)$ naturally acts on $FM$ on the right by $a.u\mapsto u\circ a$, $a\in\GL(d)$. The structure group is therefore $\GL(d)$. For $OM$, the structure group is the subgroup $\O(d)$.

A connection $\nabla$ on $M$, e.g., the Levi-Civita connection, lifts to a fiber bundle connection $\Cc$ on $FM$: A path $u(t)\in FM$ has zero acceleration if and only if each basis vector $u_i(t)$ is parallel transported on $M$. The connection $\Cc$ provides a split of the tangent bundle $TFM$ into \emph{vertical} and \emph{horizontal} components: The vertical component $VFM$ is the subbundle $\set{v\in TFM: \pi_*(v)=0}$, i.e., derivatives of paths $u(t)$ satisfying $u(t)=x$ for all $t$. That is, the base point is fixed and only the frame changes. The horizontal subbundle $HFM$ consists of derivatives of zero-acceleration, paths in $FM$ along which each basis vector $u_i(t)$ is parallel transported along the path $\pi(u(t))$ in $M$. Such paths are called \emph{horizontal}. The connection $\Cc$ is then explicitly a projection $TFM\to VFM$, and, using this, $TFM$ can be split into the direct sum $TFM=VFM\oplus HFM$. $HFM$ and $VFM$ being subsets of $TFM$ are also denoted the horizontal and vertical distributions, respectively.
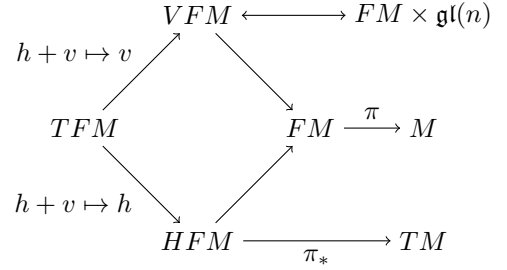
\begin{figure}[t]
\begin{center}
  \begin{tikzpicture}
  \node (TFM) {$TFM$};
  \node (empty) [right of=TFM] {};
  \node (HFM) [below of=empty] {$HFM$};
  \node (VFM) [above of=empty] {$VFM$};
  \node (FM) [right of=empty] {$FM$};
  \node (M) [right of=FM] {$M$};
  \draw[->] (TFM) to node [swap] {$h+v\mapsto h$} (HFM);
  \draw[->] (TFM) to node {$h+v\mapsto v$} (VFM);
  \node (FMlie) [above of=M] {$FM\times\mathfrak{gl}(n)$};
  \node (TM) [below of=M] {$TM$};
  \draw[->] (HFM) to node {} (FM);
  \draw[->] (VFM) to node {} (FM);
  \draw[->] (HFM) to node [swap] {$\pi_*$} (TM);
  \draw[<->] (VFM) to node [swap] {} (FMlie);
  \draw[->] (FM) to node {$\pi$} (M);
\end{tikzpicture}
\end{center}
\caption{Relations between the manifold, frame bundle, the horizontal
  distribution $HFM$, and the vertical bundle $VFM$.
  The connection $\Cc$ on $FM$ provides the splitting $TFM=HFM\oplus VFM$.
  The restrictions $\pi_*|_{H_uM}$ of the push-forward of the projection map $\pi:FM\to M$ are invertible maps $H_uM\rightarrow T_{\pi(u)}M$ with inverse $h_u$, the horizontal lift.
  The vertical bundle $VFM$ is isomorphic to the trivial bundle
  $FM\times\mathfrak{gl}(n)$.
}
\label{fig:FM}
\end{figure}

Because of this decomposition of $TFM$, any vector $v\in T_xM$ can be lifted to a unique vector in $H_uFM$, $\pi(u)=x$. This operation written as $h_u:T_{\pi(u)}M\to H_uFM$ is denoted as the \emph{horizontal lift} of $v$. In particular, the basis vectors $u_1,\ldots,u_d$ can be lifted to vectors $H_i(u):=h_u(u_i)$, $i=1,\dots,d$. This gives the set of \emph{horizontal vector fields} on $FM$. Importantly, the fields $H_i$ are globally defined, smooth, and, for each $u$, they provide a basis for $H_uFM$.


Let $\gamma$ be a curve in $M$. The connection $\Cc$ determines the parallel transport along $\gamma$ which we write $P_{\gamma}$. Often $\gamma$ will be a geodesic in which case we write $\gamma(x,w)$ if $\gamma$ starts at $x$ with derivative $\dot{\gamma}(0)=w$. Using the geometric setup described above,we can express parallel transport as used in 
e.g. \cite{poulenard_multi-directional_2018,cohen_gauge_2019} as a flow in $OM$:
\begin{lemma}
  Let $\mathbf v\in\R^d$, $u\in OM$ and $x=\pi(u)$. The transport $u(t)=P_{\gamma(x,tu\mathbf v)}(u)$ is an integral curve of a horizontal flow in $OM$.
  \label{lem:transport}
\end{lemma}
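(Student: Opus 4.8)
The plan is to show that $u(t)$ coincides with the integral curve through $u$ of the horizontal vector field $H_{\mathbf v}:=\sum_{i=1}^d \mathbf v^i H_i$ on $OM$; since each $H_i$ takes values in the horizontal distribution, so does $H_{\mathbf v}$, and its flow is by definition a horizontal flow. I would first record that $H_{\mathbf v}(w)=\sum_i \mathbf v^i h_w(w_i)=h_w(w\mathbf v)$ for every frame $w$, so that $H_{\mathbf v}$ is exactly the horizontal lift of the vector field $w\mapsto w\mathbf v$ — its anti-development has constant Euclidean coordinates $\mathbf v$.

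Next I would unwind the two structural properties of the curve $c(t)$ defined as the integral curve of $H_{\mathbf v}$ with $c(0)=u$. Because $\dot c(t)=H_{\mathbf v}(c(t))$ is horizontal, $c$ is a horizontal path in $FM$, which by the definition of horizontality given in Section~\ref{sec:geometry} means precisely that each basis vector $c_i(t)$ is parallel transported along the base curve $\pi(c(t))$, i.e. $c(t)=P_{\pi(c)|_{[0,t]}}(u)$. To identify the base curve, differentiate and use $\pi_*\circ h_w=\Id$: $\frac{\diff}{\diff t}\pi(c(t))=\pi_*\dot c(t)=\pi_*h_{c(t)}(c(t)\mathbf v)=c(t)\mathbf v$. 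Since $c(t)$ is the parallel transport of the frame $u$, the right-hand side equals the parallel transport of the initial vector $u\mathbf v$ along $\pi(c)$; a curve whose velocity is the parallel transport of its own initial velocity solves the geodesic equation, so $\pi(c(t))=\gamma(x,u\mathbf v)(t)$. By homogeneity of the geodesic flow, $\gamma(x,u\mathbf v)(t)=\gamma(x,tu\mathbf v)(1)$, hence $c(t)=P_{\gamma(x,tu\mathbf v)}(u)=u(t)$, which proves the claim modulo the check that the flow stays in $OM$.

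Then I would verify that the flow of $H_{\mathbf v}$ preserves the subbundle $OM\subset FM$, so that $u(t)$ is an integral curve of a horizontal flow \emph{in $OM$} rather than merely in $FM$; equivalently, that the $H_i$ restricted to $OM$ are tangent to $OM$. This follows from metric compatibility of the Levi-Civita connection: parallel transport along any curve is a linear isometry of tangent spaces and therefore maps orthonormal frames to orthonormal frames, so $c(0)=u\in OM$ forces $c(t)\in OM$ for all $t$. Alternatively, one differentiates $g(c_i(t),c_j(t))$ along $c$ and uses $\nabla g=0$ to see that it is constant, equal to $\delta_{ij}$.

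The only genuine obstacle is bookkeeping rather than substance: keeping straight that $P_{\gamma}$ denotes the transport induced on \emph{frames} (transporting each $u_i$) and not the transport of a single vector, and matching the reparametrization between $\gamma(x,tu\mathbf v)$ evaluated at parameter $1$ and $\gamma(x,u\mathbf v)$ evaluated at parameter $t$. Once these conventions are pinned down, the two steps combine to give that $u(t)$ is the integral curve through $u$ of the horizontal vector field $\sum_i \mathbf v^i H_i$ on $OM$, which is a horizontal flow in $OM$.
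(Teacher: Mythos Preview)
Your proof is correct and follows essentially the same route as the paper: define the horizontal vector field $H_{\mathbf v}=\sum_i\mathbf v^iH_i$, take its integral curve through $u$, identify the base curve as the geodesic $\gamma(x,u\mathbf v)$ by observing that its velocity is parallel along itself, and invoke metric compatibility to keep the curve in $OM$. Your version is more explicit about the identity $\pi_*\circ h_w=\Id$ and the reparametrization $\gamma(x,u\mathbf v)(t)=\gamma(x,tu\mathbf v)(1)$, but the argument is the same.
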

\begin{proof}
  For $\mathbf v\in\R^d$, $h_u(u\mathbf v)=\sum_{i=1}^dH_i(u)\mathbf v^i$ is a vector field on $FM$. Let $\Phi:FM\times\R\to FM$ be the unique flow satisfying $\partial_t\Phi(u)=h_u(u\mathbf v)$, and set $u(t):=\Phi(u,t)$. Then $u(t)$ is horizontal because $h_u(u\mathbf v)\in HFM$, and thus, for each $i=1,\dots,d$, $u_i(t)$ is parallel transported along $\pi(u(t))$. In particular, $u\mathbf v=\sum_{i=1}u_i\mathbf v^i$ is parallel transported along $\pi(u(t))$, and because $u\mathbf v=\partial_t\pi(u(t))$, $\pi(u(t))$ is the geodesic $\gamma(x,u\mathbf v)$. $u(t)$ is orthonormal for all $t$ because $u\in OM$ and the distribution $HFM$ is tangent to $OM$.
\end{proof}
While the lemma only gives an expression for the parallel transport $P_{\gamma(x,u\mathbf v)}(u)$ in the language of horizontal flows, it provides the basis for understanding the coupling between convolution layers and the stochastic horizontal flows described below.

\section{Orientations of filters and curvature}
We here aim to use the constructions outlined in section~\ref{sec:geometry} to show how curvature couples with the change of orientations happening when directions are transported backwards through a multilayer network on the evaluation of the last layer. In the next section, we will then use this approach to derive a continuous transport of orientations globally over the nonlinear domain. This allows, for example, to avoid max-pooling over directions before a fully connected final layer that combines information from distant points of the space. The overall aim is to show how the frame bundle $FM$, the subbundle of orthonormal frames $OM$, and horizontal flows in the tangent bundle $TFM$ provide a structured way to account for the change of orientations caused by the holonomy of the manifold.

We start with a brief outline of strategies for generalizing convolution to manifold domains, focusing in particular on the directional functions as introduced in \cite{poulenard_multi-directional_2018} and fiber bundles as used in the gauge equivariant networks \cite{cohen_gauge_2019}. The different formulations of convolutions discussed below are summarized in Table~\ref{tbl:convolutions}.

\begin{center}
  \setlength{\tabcolsep}{-1pt}
\begin{table}[h]
\begin{tabular}{lll}
  Definition & Description \\
  \hline 
  $k\ast f(\mathbf x)=\int_{\R^d}k(-\mathbf v)f(\mathbf x+\mathbf v)d\mathbf v$ & Euclidean, $f:\R^d\to\R$ \\
  $k\ast f(x,v)=\int_{T_xM}k_w( v)f(\overline{\Exp}_x(v))dv$ & $\Exp$+par.trans., $f:M\times TM$ \\
  \hspace{-.25cm} $\begin{array}{l}
    k\ast f(x)=\\
    \quad\int_{\R^d} k(\mathbf v)\rho_{x\leftarrow\Exp_x(u_x\mathbf v)}f(\Exp_x(u_x\mathbf v))d\mathbf v
  \end{array}$ & gauge, $f:M\to N_{\mathrm{in}}$ \\
  $k\ast f(u)=\int_{\R^d}k(-\mathbf v)f(P_{\gamma(x,u\mathbf v)}(u))d\mathbf v$ & directional, $f:OM\to\R$ \\
  \hspace{-.25cm} $\begin{array}{l}
    k\ast_{U_T^u,d\mathbf v} f(x)=\\
    \ \int_{\R^d}\int_{\pi^{-1}(\Exp_x(u\mathbf v))}k(-\mathbf v)f(U_T^u)P(dU_T^u)d\mathbf v
  \end{array}$ & $OM$ distribution $U_T^u$ \\
  \hspace{-.25cm} $\begin{array}{l}
    k\ast_{U_T^u,\Log} f(x)=\\
    \quad\E[k(-u^{-1}\Log_x(\pi(U_T^u)))f(U_T^u)]
  \end{array}$ & $OM$ distribution $U_T^u$, $\Log$ \\
  \hspace{-.25cm} $\begin{array}{l}
    k\ast_{W_T} f(u)=
    \E[k(-W_T)f(U_T^u)]
  \end{array}$ & $\R^d$ Brown. motion $W_t$, \\
  & $OM$ process $U_t^u$\\
\end{tabular}
\caption{List of convolution operators used in the paper.
}
\label{tbl:convolutions}
\end{table}
\end{center}

\subsection{Background}

Let $M$ be a $d$-dimensional Riemannian manifold and $f:M\rightarrow\R^\din$ a vector-valued function, e.g. an single channel image ($\din=1$) or an RGB image ($\din=3$). 
If $M$ is Euclidean, i.e. $M=\R^d$, each layer in a convolutional network applies the Euclidean convolution $k\ast f(x)=\int_{\R^d}k(-\mathbf v)f(x+\mathbf v)d\mathbf v$ using a kernel $k:\R^d\to\R^{\dout\times \din}$ to give a $\dout$-dimensional output. This is followed by composition with a non-linearity on each component. Discretizing the convolution spatially gives the output as $y(x)=\sum_{i,j}  k(-i,-j) f(x+(i,j))$ when $d=2$ and the sum over $i$ and $j$ runs over the support of the kernel. The kernel $k$ is then specified by a finite set of entries referred to as weights. The linearity of the convolution operation gives rise to the view of each layer as a tensor on functions $M\to\R^\din$. When the convolution appears in the $l$-th layer of a multilayer network, each component $y^n$ of the vector-valued output can be regarded as a result of a tensor convolution 
\begin{equation}
  y^n=k^n_1\ast f^1+\cdots+k^n_m\ast f^m
  \label{eq:tensor_conv}
\end{equation}
using a set of kernels $k^n_m$.

When $M$ is a nonlinear manifold, approaches for generalizing convolution includes spectral methods \cite{bruna_spectral_2013}, and techniques using the group structure when $M$ is a Lie group or a homogeneous space \cite{cohen_group_2016}.
\cite{monti_geometric_2016}, building on \cite{masci_geodesic_2015} and \cite{boscaini_learning_2016}, defines the convolution operator using \emph{pseudo-coordinates}, a family of local charts $\phi_x$, $x\in M$ that by mapping each point in a neighborhood $U_x\subset M$ to $\R^d$ allows the notion of \emph{patch-operator} to be defined as $D_jf=\int_{U_x}w_j(\phi_x(y))f(y)dy$. The patch operator is subsequently matched to a template to give the final generalized convolution. Particularly, the patch operator can be chosen to be rotationally invariant, e.g. using local geodesic polar coordinates, thus removing ambiguity in the orientation of the chart. However, this significanty restricts the wealth of kernels that can be used in the network. 

The pseudo-coordinates in \cite{boscaini_learning_2016} allows rotationally non-invariant kernels by aligning orientations with respect to the directions of maximal curvature direction.
However, handling the ambiguity of rotations is not solved entirely in this way because maximal curvature direction may not be defined (e.g. on constant curvature spaces such as spheres); curvature is a local notion which can imply rapid shifts in directions over short distances; topology constrains the set of non-vanishing continuous vector fields (e.g., the hairy-ball theorem on spheres) and so a continuous set of orientations cannot generally be found on topologically non-trivial spaces.

To handle the lack of global orientations on surfaces ($d=2$), \cite{poulenard_multi-directional_2018} proposes to convolve with functions $f:M\times TM$ that in the second argument take a tangent vector representing a direction. This vector is parallel transported along minimizing geodesics resulting in the convolution $k\ast f(x,w)=\int_{T_xM}k_w( v)f(\overline{\Exp}_x(v))dv$ where the map $\overline{\Exp}_x$ is the Riemannian exponential map $\Exp_x:T_xM\to M$ combined with parallel translation of the (normalized) vector $v$ to provide directional information for the evaluation of $f$. The direction $w$ in the 2D case determines an isometry between $T_xM$ and $\R^2$, and the kernel $k_w$ is defined using this isometry. The use of directional functions implies that directions are propagated between layers in a consistent way. Parallel transport is also used to define convolution in \cite{schonsheck_parallel_2018}.

Gauge equivariant networks \cite{cohen_gauge_2019} provide a related approach to handle directional ambiguity. A gauge for the tangent bundle $TM$ is a map $u:U\times \R^d\to TM$ that for each $x$ in an open set $U\subset M$ gives an invertible linear map $u_x:\R^d\to T_xM$. Equivalently, a gauge is a local section of the frame bundle. Let $\pi_{\mathrm{in}}:N_{\mathrm{in}}\to M,\pi_{\mathrm{out}}:N_{\mathrm{out}}\to M$ be two vector bundles over $M$. A gauge equivariant convolution takes as input a section $f:M\to N_{\mathrm{in}}$ of $N_{\mathrm{in}}$ and outputs a section of $N_{\mathrm{out}}$ given by $k\ast f(x)=\int_{\R^d} k(\mathbf v)\rho_{x\leftarrow\Exp_x(u_x\mathbf v)}f(\Exp_x(u_x\mathbf v))d\mathbf v$ where $P_{x\leftarrow p}:\pi_{\mathrm{in}}^{-1}(p)\to\pi_{\mathrm{in}}^{-1}(x)$ is a transport operation in $N_{\mathrm{in}}$ along a unique minimizing geodesic, either by parallel translating each vector of $u_{x,\mathrm{in}}$ to $p$ or by using a connection on the bundle $N_{\mathrm{in}}$. Gauges enter in the kernel as $k(\mathbf v)=u_{x,\mathrm{out}}k(\mathbf v)u_{x,\mathrm{in}}^{-1}$ where $k(\mathbf v)\in\R^{\din,\dout}$ and $u_{x,\mathrm{in}}$, $u_{x,\mathrm{out}}$ are frames for $N_{\mathrm{in}},N_{\mathrm{out}}$, respectively.
The convolution can be shown to be independent of the choice of gauges if $k$ satisfies an invariance condition \cite{cohen_gauge_2019} dependent on the structure group, see also \cite{kondor_generalization_2018}. Particularly relevant is equivariance to the rotation group $\SO(d)$, equivalently choices of orthonormal frames in the bundle $OM$ described below. In this case, gauge equivariance for scalar valued functions is equivalent to rotational invariance.

\subsection{Directional functions}
A natural generalization to higher dimensions of the convolution of directional functions on surfaces in \cite{poulenard_multi-directional_2018} is to define convolution on functions $f:OM\to\R$ where $OM$ is the orthonormal frame bundle $OM$. Then convolution can be defined as  
\begin{equation}
  k\ast f(u)
  =
  \int_{\R^d}k(-\mathbf v)f(P_{\gamma(x,u\mathbf v)}(u))d\mathbf v,\ x=\pi(u)
  \label{eq:dir_conv}
\end{equation}
Note that the frame bundle element $u$ is used to map the $\R^d$ vector $v$ to the vector $u\mathbf v$ in $T_xM$ to give the direction of the geodesic $\gamma(x,u\mathbf v)$. The frame $u$ is then parallel transported along this geodesic to enable evaluation of $f$.
Here, we focus on real valued functions and kernels $k:\R^d\to\R$ though the construction and the additional convolutions defined below extend to multiple output channels or bundle valued outputs (see section~\ref{sec:conv_properties}).

\begin{remark}
  Note the difference between this construction and the gauge equivariant case: The input functions in the latter case have no directional input information; instead they are equivariant to gauge changes, e.g., the action of $\SO(d)$. However, for $f:OM\to\R$ and $x\in M$, we can view the restriction $f|_{\pi|_{OM}^{-1}(x)}$ to the fiber over $x$ as an element of the bundle $N=\set{x\in M|\ h:\pi|_{OM}^{-1}(x)\to\R}$ of functions on the fibers $\pi|_{OM}^{-1}(x)$. The group $\SO(d)$ acts on $N$ on the right by $a.h(u)=h(a.u)$, $a\in \SO(d)$, and the convolution \eqref{eq:dir_conv} can be seen as a gauge equivariant network with $N_{\mathrm{in}}=N_{\mathrm{out}}=N$. In fact, in this case, equivariance implies that any convolution output is a function on the fibers because of the dependence on the frame/gauge. Orientation functions can be seen as continuous analogues of the discrete rotations used in \cite{cohen_gauge_2019}.
\end{remark}

In the convolution \eqref{eq:dir_conv}, it is important to note that directional information propagates \emph{backwards} through a composition of layers: As illustrated in Figure~\ref{fig:backwards}, let $f_1=k_1\ast f$ and $f_2=k_2\ast f_1$. Then in the convolution to produce $f_2(u)$, $u$ is parallel transported along geodesics from $x=\pi(u)$ in order to evaluate $f_1$. The frames $P_{\gamma(x,u\mathbf v)}(u)$ are then in turn parallel transported a second time before evaluating $f$. Because of the path dependence of parallel transport, this is in general not equal to parallel transporting only once if evaluating a filter $(k_2\ast k_1)\ast f$ with $k_2\ast k_1$ denoting the standard Euclidean convolution. We show below how this difference is related to the curvature of $M$.
\begin{figure}[t]
  \centering
  \def\svgwidth{1.\columnwidth}
  \import{./figures/}{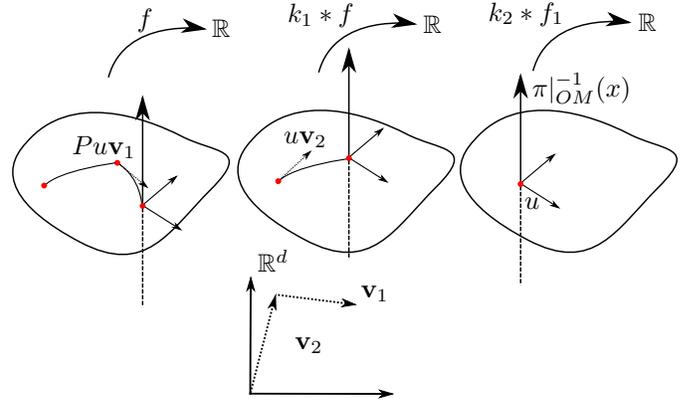} 
  \caption{
    Directional information captured in the frame $u\in OM$ which is an element of the fiber $\pi_{OM}^{-1}(x)$ (illustrated by vertical arrow in rightmost sketch) over a point $x\in M$ (red dot). In the convolution \eqref{eq:dir_conv}, $u$ is parallel transported along geodesics with initial direction $u\mathbf v_2$ for vector $\mathbf v_2\in\R^d$ (center). When composing convolutions, the frame $P_{\gamma(x,u\mathbf v_2)}u$ at $\gamma(x,u\mathbf v_2)$ is transported along a second geodesic with direction $P_{\gamma(x,u\mathbf v_2)}u\mathbf v_1$ (left, subscript on $P_{\gamma(x,u\mathbf v_2)}$ omitted). The directional information in $u$ is thus transported \emph{backwards} through the layers of a multiplayer network. The curvature implies that this transport is path dependent: A different choice of path would yield a different transport.
        }
  \label{fig:backwards}
\end{figure}

The convolutions defined in \cite{poulenard_multi-directional_2018}, \cite{cohen_gauge_2019} and \eqref{eq:dir_conv} above implicitly construct a gauge in the evaluation of the integral in the convolution because the parallel transport $P_{\gamma(x,u\mathbf v)}(u)$ gives a local section of $OM$. This is a specific choice of gauge, and a different choice would result in different results of the convolution. In particular, a different choice of paths along which $u$ is parallel transported would have this effect (see also discussion in \cite{cheng_covariance_2019}). Below, we will embrace this by defining a measure on such paths and integrating out the effects of the difference in parallel transport. As noted above, $k$ is often assumed to have limited support implying that the choice of paths may not have a great effect. However, this may not be the case when the output features of distant points are compared in fully connected layers appearing as the last layers of a multilayer network. Currently, max-pooling over directions is often applied before such a layer \cite{cohen_gauge_2019}. Below, we construct a principled way to integrate rotations without  such a pooling.

\subsection{Curvature and composition of layers}
Associativity $(k_2\ast k_1)\ast f=k_2\ast (k_1\ast f)$ of the Euclidean convolution is a consequence of its translation invariance. Parallel transport of frames along geodesics implies translation invariance along those geodesics, but the path dependence rules out translation invariance between points when the path is not specified. For one layer filters with limited support, one can reasonably restrict to transport along geodesics as above. But with multiple layers, the integral in the convolution is evaluated along compositions of geodesics giving curves that are only piecewise geodesic as illustrated in Figure~\ref{fig:backwards}. This again implies a rotation of filters. We here connect this fact to horizontal $OM$ flows and curvature.

The following statement which comes from an application of Taylor's theorem on $OM$, uses parallel transport and $OM$ to express the lack of associativity and commutativity of the convolution operation directly in terms of the curvature of $M$. We use the bracket $[h_u(v),h_u(w)]f=h_u(v)h_u(w)f-h_u(w)h_u(v)f$ for $v,w\in T_xM$ which measures the lack of commutativity of derivatives by horizontal vector fields. The bracket is directly linked to the curvature tensor $R$ of $M$ by the relation
\begin{equation}
  R(v,u)=-\Cc([h_u(v),h_u(w)])
   \ ,
  \label{eq:curvature_bracket}
\end{equation}
i.e., the curvature measures the vertical component of the bracket between horizontal vector fields.
\begin{theorem}
  Let $k_1,k_2$ be kernels with $\supp(k_i)\subseteq B_r(0)$, and $f\in C^3(OM,\R)$. Then
  \begin{align}
    &k_2\ast (k_1\ast f)
    -
    (k_2\ast k_1)\ast f
    \label{eq:associativity}
    \\
    &\ =
    \int_{\R^d}
    \int_{\R^d}
    k_2(-\mathbf v_2)k_1(-\mathbf v_1)
    h_u(u\mathbf v_2)h_u(u\mathbf v_1)f
    d\mathbf v_1
    d\mathbf v_2
    \\&\quad\ 
    +
    o(r^{d+1})
    \nonumber
  \end{align}
  and
  \begin{align}
    &k_2\ast (k_1\ast f)
    -
    k_1\ast (k_2\ast f)
    \label{eq:commutativity}
    \\
    &\ =
    \int_{\R^d}
    \int_{\R^d}
    k_2(-\mathbf v_2)k_1(-\mathbf v_1)
    [h_u(u\mathbf v_2),h_u(u\mathbf v_1)]f
    d\mathbf v_1
    d\mathbf v_2
    \\&\quad\ 
    +
    o(r^{d+1})
    \nonumber
    \ .
  \end{align}
  In particular, the vertical part of the non-commutativity in \eqref{eq:commutativity} is a function of the curvature tensor $R(u\mathbf v_2,u\mathbf v_1)$.
  \label{eq:assoc_lem}
\end{theorem}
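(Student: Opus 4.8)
The plan is to rewrite everything in terms of flows of the horizontal vector fields $H_i$, expand $f$ along compositions of such flows by Taylor's theorem on $OM$ (the ``exponential of a vector field'', i.e. Baker--Campbell--Hausdorff, expansion), and read off the surviving terms. First I would use Lemma~\ref{lem:transport}: writing $X_{\mathbf v}:=\sum_{i=1}^d\mathbf v^iH_i$ (so $X_{\mathbf v}(u)=h_u(u\mathbf v)$) and letting $\phi^{\mathbf v}_t$ denote its flow on $OM$, we have $P_{\gamma(x,u\mathbf v)}(u)=\phi^{\mathbf v}_1(u)$. Since $X_{r\mathbf w}=rX_{\mathbf w}$, rescaling $\mathbf v$ is the same as rescaling time, which makes a small-$r$ expansion clean; and for $|\mathbf v|<r$ with $r$ small and $u$ fixed, $\phi^{\mathbf v}_1(u)$ and the composed flow endpoints stay in a fixed compact neighbourhood of $u$, so completeness of $M$ is not needed. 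Unwinding the definitions, the three layer compositions become double integrals over $B_r(0)\times B_r(0)$ against the common weight $k_2(-\mathbf v_2)k_1(-\mathbf v_1)$: $k_2\ast(k_1\ast f)(u)=\iint k_2(-\mathbf v_2)k_1(-\mathbf v_1)\,f\bigl(\phi^{\mathbf v_1}_1(\phi^{\mathbf v_2}_1(u))\bigr)\,d\mathbf v_1\,d\mathbf v_2$; $k_1\ast(k_2\ast f)(u)$ the same with $\phi^{\mathbf v_2}_1(\phi^{\mathbf v_1}_1(u))$; and, after the change of variables $\mathbf v_3=\mathbf v_1+\mathbf v_2$ in the Euclidean convolution defining $k_2\ast k_1$, $(k_2\ast k_1)\ast f(u)=\iint k_2(-\mathbf v_2)k_1(-\mathbf v_1)\,f\bigl(\phi^{\mathbf v_1+\mathbf v_2}_1(u)\bigr)\,d\mathbf v_1\,d\mathbf v_2$.

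Next, for $f\in C^3(OM,\R)$, Taylor's theorem along a flow gives $f(\phi^{\mathbf v}_1(p))=f(p)+(X_{\mathbf v}f)(p)+\tfrac12(X_{\mathbf v}^2f)(p)+\mathrm{rem}$ with $|\mathrm{rem}|\le C|\mathbf v|^3$ uniformly for $p$ in a fixed compact set. Iterating for the compositions yields the truncated BCH expansions $f\bigl(\phi^{\mathbf v_1}_1(\phi^{\mathbf v_2}_1(u))\bigr)=f(u)+(X_{\mathbf v_1}+X_{\mathbf v_2})f(u)+\bigl(\tfrac12X_{\mathbf v_1}^2+X_{\mathbf v_2}X_{\mathbf v_1}+\tfrac12X_{\mathbf v_2}^2\bigr)f(u)+O\bigl((|\mathbf v_1|+|\mathbf v_2|)^3\bigr)$ and $f(\phi^{\mathbf v_1+\mathbf v_2}_1(u))=f(u)+(X_{\mathbf v_1}+X_{\mathbf v_2})f(u)+\tfrac12(X_{\mathbf v_1}+X_{\mathbf v_2})^2f(u)$ with the same cubic error. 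Subtracting, in each of the differences in \eqref{eq:associativity} and \eqref{eq:commutativity} the zeroth- and first-order terms agree and the pure second-order terms $\tfrac12X_{\mathbf v_i}^2f$ cancel, leaving only the mixed second-order terms: for \eqref{eq:commutativity} they give $X_{\mathbf v_2}X_{\mathbf v_1}f-X_{\mathbf v_1}X_{\mathbf v_2}f=[h_u(u\mathbf v_2),h_u(u\mathbf v_1)]f$, and for \eqref{eq:associativity} the failure of $\phi^{\mathbf v_1}_1\circ\phi^{\mathbf v_2}_1$ to equal $\phi^{\mathbf v_1+\mathbf v_2}_1$ leaves, to leading order, the mixed term $h_u(u\mathbf v_2)h_u(u\mathbf v_1)f$. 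Integrating against $k_2(-\mathbf v_2)k_1(-\mathbf v_1)$ reproduces the displayed double integrals, while the integrated remainders $\iint|k_2(-\mathbf v_2)||k_1(-\mathbf v_1)|\,O\bigl((|\mathbf v_1|+|\mathbf v_2|)^3\bigr)\,d\mathbf v_1\,d\mathbf v_2$ are of higher order, giving the $o(r^{d+1})$ remainder claimed.

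For the final assertion I would split the vector field $[h_u(u\mathbf v_2),h_u(u\mathbf v_1)]$ through the connection $\Cc$ into horizontal and vertical parts. Its vertical part is $\Cc\bigl([h_u(u\mathbf v_2),h_u(u\mathbf v_1)]\bigr)=-R(u\mathbf v_2,u\mathbf v_1)$ by \eqref{eq:curvature_bracket}, read under the identification $VFM\cong FM\times\mathfrak{gl}(d)$; hence the vertical contribution to the non-commutativity \eqref{eq:commutativity} is the derivative of $f$ along the fundamental (vertical) vector field of $-R(u\mathbf v_2,u\mathbf v_1)\in\mathfrak{gl}(d)$, which is manifestly a function of the curvature tensor $R(u\mathbf v_2,u\mathbf v_1)$.

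The step I expect to be the main obstacle is the remainder control: showing that the third-order Taylor remainders along the \emph{composed} flows are $O\bigl((|\mathbf v_1|+|\mathbf v_2|)^3\bigr)$ with a constant uniform over a fixed compact neighbourhood of $u$, so that after integration against the compactly supported $k_i$ one genuinely obtains the stated $o(r^{d+1})$, together with keeping the BCH bookkeeping straight -- in particular verifying that the lower-order and pure second-order contributions really cancel between the two sides and isolating the surviving mixed term with the correct ordering. Everything else is a routine unwinding of definitions and the standard vector-field-exponential expansion.
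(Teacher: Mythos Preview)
Your approach is essentially the paper's own: the paper also sets $f^{\mathbf v}(u)=f(P_{\gamma(\pi(u),u\mathbf v)}u)$ (your $f\circ\phi^{\mathbf v}_1$), Taylor-expands along the horizontal flow via Lemma~\ref{lem:transport}, iterates to obtain $(f^{\mathbf v_1})^{\mathbf v_2}$, subtracts $f^{\mathbf v_1+\mathbf v_2}$ and integrates, and then derives \eqref{eq:commutativity} from \eqref{eq:associativity} together with commutativity of the Euclidean convolution. Your write-up is simply more explicit about the second-order BCH bookkeeping (the paper records only $f^{\mathbf v}(u)=f(u)+h_u(u\mathbf v)f+o(\|\mathbf v\|)$ before iterating), and the obstacle you flag---uniform cubic remainder control for the composed flows---is precisely the detail the paper's brief argument leaves implicit.
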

\begin{proof}
  Let $f^\mathbf v:OM\to\R$ be the map $u\mapsto f(P_{\gamma(\pi(u),u\mathbf v)}u)$ (the map in the integrand of \eqref{eq:dir_conv}).
  By Taylor's theorem and Lemma~\ref{lem:transport}, $f^\mathbf v(u)=f(u)+h_u(u\mathbf v)f+o(\|\mathbf v\|)$. Applying Taylor's theorem again, we get $(f^{\mathbf v_1})^{\mathbf v_2}(u)=f(u)+h_u(u{\mathbf v_1})f+h_u(u{\mathbf v_2})f+h_u(u{\mathbf v_2})h_u(u{\mathbf v_1})f+o(\|\mathbf v_1\|,\|\mathbf v_2\|)$. Then, using the regular Euclidean convolution $k_1\ast k_2$,
  \begin{align*}
    &k_2 \ast (k_1\ast f)(u)
    -(k_2 \ast k_1)\ast f(u)
    \\
    &=
    \int_{\R^{2d}}
    k_2(-\mathbf v_2)k_1(-\mathbf v_1)
    (
    (f^{\mathbf v_1})^{\mathbf v_2}(u)
    -
    f^{\mathbf v_1+\mathbf v_2}(u)
    )
    d(\mathbf v_1,\mathbf v_2)
    \\
    &=
    \int_{\R^{2d}}
    k_2(-\mathbf v_2)k_1(-\mathbf v_1)
    h_u(u\mathbf v_2)h_u(u\mathbf v_1)f
    )
    d(\mathbf v_1,\mathbf v_2)
    \\&\quad
    +
    o(r^{d+1})
    \ .
  \end{align*}
  The commutativity relation \eqref{eq:commutativity} results from using \eqref{eq:associativity} and commutativity of the Euclidean convolution.
\end{proof}

The result makes explicit the relation between non-commutativity and non-associativity of convolution kernels when using parallel transport and the curvature of the manifold, equivalently non-integrability of the horizontal distribution $HFM$ as seen by the brackets $[h_u(u\mathbf v_1),h_u(u\mathbf v_2)]$ being nonzero.
One can also view the use of the Riemannian exponential map and parallel transport along geodesics as a linearization of the manifold \cite{sommer_manifold_2010}. The lack of commutativity is a reflection of the fact that such linearizations generally do not lift to subspaces of the frame bundle: The lifted vector fields are horizontal, but the horizontal fields are only integrable (and thus tangent fields of a subspace) when the curvature is zero.

Theorem~\ref{eq:assoc_lem} is local statement that describes how curvature at the center point $x$ implies noncommutativity of the convolution. If the kernels have significant mass away from $0$ in $\R^d\simeq T_xM$, this linearized view cannot fully capture the nonlinear effect of curvature. The resulting deviation is captured in the remainder term $o(r^{d+1})$. Importantly, the theorem states that the convolution will be noncommutative no matter how small $r$ is when $k_1,k_2$ are positive in the ball $B_r(0)$, regardless of the dimension $d$.

\section{Convolution with horizontally distributed orientations}
When applying multiple convolutions, the consecutive application of parallel transport is related to time-parameterized flows in stochastic analysis on manifolds. We first describe the Riemannian Brownian motion as an example of such a flow and use it to distribute orientations along multiple paths between pairs of points. Subsequently, we show how compositions of many layers can be seen as a time-discretized $OM$ flow.

Let $(\Omega,(\mathcal F_t)_{t\ge 0},\mathbb P)$ be a standard probability space with filtration $(\mathcal F_t)_{t\ge 0}$. The Riemannian Brownian motion is a stochastic process $X_t$, i.e. a time-indexed sequence of $\mathcal F_t$ measurable, $M$-valued random variables, that has density $p_t(\cdot; x)$ where $x$ denotes the starting point of the process, i.e. $X_0=x$. $p_t$ is also denoted as the heat kernel, and it satisfies the heat equation $\partial_t p_t=-\frac12 \Delta p_t$ where $\Delta$ is the Laplace-Beltrami operator of $M$. $p_t$ is smooth for all $t>0$, and non-zero if $M$ is connected. The heat flow has been used extensively in geometric deep learning \cite{bruna_spectral_2013}. Here we focus on its relation to parallel transport.

The construction of the Riemannian Brownian motion is non-trivial due to the global nature of the process but the local nature of charts, see e.g. \cite{emery_stochastic_1989}. One approach is the Eells-Elworthy-Malliavin construction \cite{elworthy_geometric_1988} that avoids the use of charts by mapping horizontal $OM$ flows to $M$: An $\R^d$-valued Euclidean Brownian motion $W_t$ is mapped to an $OM$-valued stochastic process $U_t$ by the SDE $dU_t=\sum_{i=1}^d H_i\circ_{\mathcal S} dW_t^i$ where $\circ_{\mathcal S}$ denotes Stratonovich integration, see e.g. \cite{hsu_stochastic_2002} for details. The starting point $U_0$ is one point in $u\in OM$. We make the dependence on the starting point explicit by writing $U_t^u$. By mapping $U_t^u$ to the manifold, the resulting process $X_t^x=\pi(U_t^u)$ is a Brownian motion with starting point $x=\pi(u)$. Figure~\ref{fig:processes} illustrates the relation between $W_t$, $U_t$ and $X_t$. Long-time existence of the Brownian motion can be proven under mild assumptions on $M$ (e.g. compactness is sufficient).
\begin{figure}[t]
  \centering
  \def\svgwidth{1.\columnwidth}
\begingroup%
  \makeatletter%
  \providecommand\color[2][]{%
    \errmessage{(Inkscape) Color is used for the text in Inkscape, but the package 'color.sty' is not loaded}%
    \renewcommand\color[2][]{}%
  }%
  \providecommand\transparent[1]{%
    \errmessage{(Inkscape) Transparency is used (non-zero) for the text in Inkscape, but the package 'transparent.sty' is not loaded}%
    \renewcommand\transparent[1]{}%
  }%
  \providecommand\rotatebox[2]{#2}%
  \newcommand*\fsize{\dimexpr\f@size pt\relax}%
  \newcommand*\lineheight[1]{\fontsize{\fsize}{#1\fsize}\selectfont}%
  \ifx\svgwidth\undefined%
    \setlength{\unitlength}{297.63779528bp}%
    \ifx\svgscale\undefined%
      \relax%
    \else%
      \setlength{\unitlength}{\unitlength * \real{\svgscale}}%
    \fi%
  \else%
    \setlength{\unitlength}{\svgwidth}%
  \fi%
  \global\let\svgwidth\undefined%
  \global\let\svgscale\undefined%
  \makeatother%
  \begin{picture}(1,0.5154262)%
    \lineheight{1}%
    \setlength\tabcolsep{0pt}%
    \put(0,0){\includegraphics[width=\unitlength]{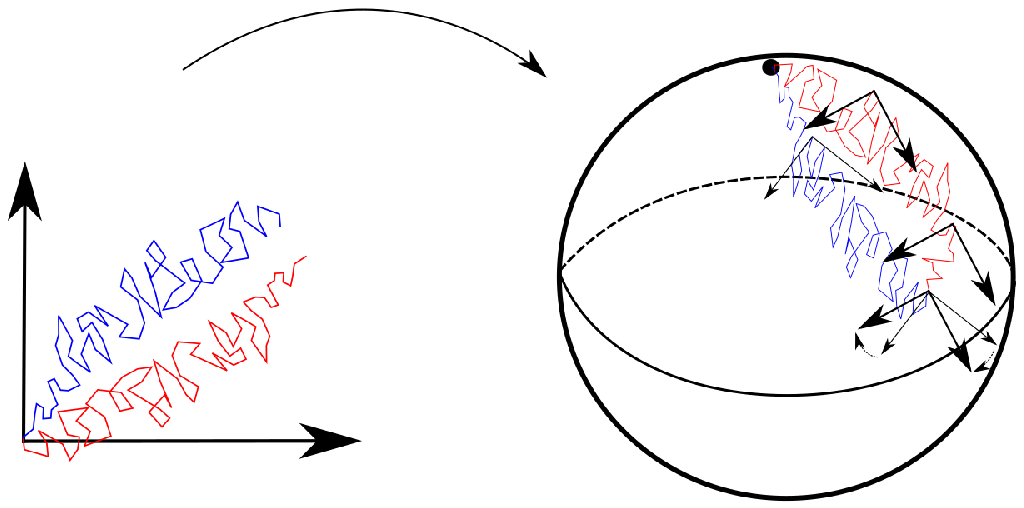}}%
    \put(0.72200022,0.44525997){\color[rgb]{0,0,0}\makebox(0,0)[lt]{\lineheight{1.25}\smash{\begin{tabular}[t]{l}$x$\end{tabular}}}}%
    \put(0.02691115,0.35147175){\color[rgb]{0,0,0}\makebox(0,0)[lt]{\lineheight{1.25}\smash{\begin{tabular}[t]{l}$\R^d$\end{tabular}}}}%
    \put(0.77333689,0.20547013){\color[rgb]{0,0,0}\makebox(0,0)[lt]{\lineheight{1.25}\smash{\begin{tabular}[t]{l}$U_t$\end{tabular}}}}%
    \put(0.25830298,0.1047877){\color[rgb]{0,0,0}\makebox(0,0)[lt]{\lineheight{1.25}\smash{\begin{tabular}[t]{l}$W_t$\end{tabular}}}}%
  \end{picture}%
\endgroup%
 
  \caption{
    The relation between the Euclidean $\R^d$-valued Brownian motion $W_t$, the $OM$ process $U_t$ that carries the frame by parallel transporting along the stochastic paths $X_t=\pi(U_t)$ on $M$. $W_t$ is mapped to $U_t$ and $X_t$ by \emph{development}. The reverse mapping is denoted \emph{anti-development}. Two sample paths $X_t(\omega_1)$, $X_t(\omega_2)$ (blue/red) ending at the same point in $M$ ($X_T(\omega_1)=X_T(\omega_2)$) need not end at the same point when anti-developed to $\R^d$. The curvature implies that $U_T(\omega_1)$, $U_T(\omega_2)$ may hit different points in the fiber over the endpoint. The difference is a rotation (or gauge transformation). 
        }
  \label{fig:processes}
\end{figure}

For each $t$, $U_t$ is an $OM$-valued random variable and $X_t$ is an $M$-valued random variable. The distribution corresponding to $X_t$ has density $p_t(\cdot; x)$. $U_t$ may also have a smooth density on $OM$, however, this depends on the curvature: $U_t$ may at time $t$ hit a fiber $\pi|_{OM}^{-1}(y)$, $y\in M$ along many different paths on $M$, not just geodesics. Each such path will have its own parallel transport, and which point in the fiber is hit depends on the path. The difference will be a shift of orientation, i.e., a gauge transformation. $U_t$ thus gives a distribution of orientations for each fiber $\pi|_{OM}^{-1}(y)$. For flat manifolds, parallel transport is path independent, and $U_t$ is supported on a $d$-dimensional submanifold of $OM$. Conversely, on curved manifolds with holonomy group $\SO(d)$, all rotations appear, $U_t$ will be non-zero on all of $OM$, and it will have a smooth, positive density.

While for the convolution \eqref{eq:dir_conv} we only used geodesics from a point $x$ in the parallel transport, the $OM$-flow defines a probability measure $\mathbb P_{U_t^u}$ on stochastic paths from $x$. We can use this to define a convolution that takes any $\mathbb P_{U_t^u}$-measurable path into account:
  \begin{equation}
    k\ast_{U_T^u,d\mathbf v} f(u)
    =
    \int_{\R^d}\int_{\pi^{-1}(\Exp_x(u\mathbf v))}k(-\mathbf v)f(U_T^u)\mathbb P(dU_T^u)d\mathbf v
    \label{eq:horz_conv}
  \end{equation}
  Note that the definition integrates over each fiber $\pi^{-1}(\Exp_x(u\mathbf v))$ in $OM$ with the distribution on each fiber being a result of the $OM$ flow $U_t^u$ at a fixed time $T>0$, see Figure~\ref{fig:horz_conv}. The notation $\ast_{U_T,d\mathbf v}$ makes the dependence on the measures $\mathbb P(dU_T^u)$ and $d\mathbf v$ in the integration explicit.
\begin{figure}[t]
  \centering
  \def\svgwidth{.6\columnwidth}
\begingroup%
  \makeatletter%
  \providecommand\color[2][]{%
    \errmessage{(Inkscape) Color is used for the text in Inkscape, but the package 'color.sty' is not loaded}%
    \renewcommand\color[2][]{}%
  }%
  \providecommand\transparent[1]{%
    \errmessage{(Inkscape) Transparency is used (non-zero) for the text in Inkscape, but the package 'transparent.sty' is not loaded}%
    \renewcommand\transparent[1]{}%
  }%
  \providecommand\rotatebox[2]{#2}%
  \newcommand*\fsize{\dimexpr\f@size pt\relax}%
  \newcommand*\lineheight[1]{\fontsize{\fsize}{#1\fsize}\selectfont}%
  \ifx\svgwidth\undefined%
    \setlength{\unitlength}{112.74603536bp}%
    \ifx\svgscale\undefined%
      \relax%
    \else%
      \setlength{\unitlength}{\unitlength * \real{\svgscale}}%
    \fi%
  \else%
    \setlength{\unitlength}{\svgwidth}%
  \fi%
  \global\let\svgwidth\undefined%
  \global\let\svgscale\undefined%
  \makeatother%
  \begin{picture}(1,1.18166768)%
    \lineheight{1}%
    \setlength\tabcolsep{0pt}%
    \put(0,0){\includegraphics[width=\unitlength]{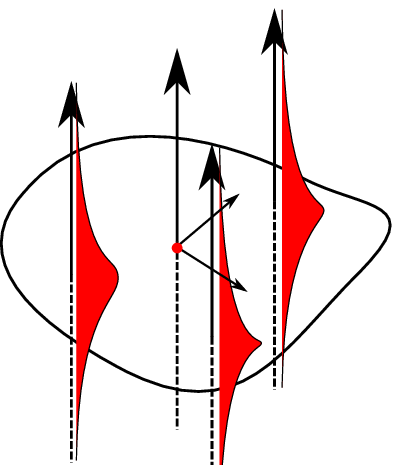}}%
    \put(0.36240495,1.13022063){\color[rgb]{0,0,0}\makebox(0,0)[lt]{\lineheight{1.25}\smash{\begin{tabular}[t]{l}$\pi|_{OM}^{-1}(x)$\end{tabular}}}}%
    \put(0.38236104,0.54340925){\color[rgb]{0,0,0}\makebox(0,0)[lt]{\lineheight{1.25}\smash{\begin{tabular}[t]{l}$u$\end{tabular}}}}%
  \end{picture}%
\endgroup%
 
  \caption{
     The convolution \eqref{eq:horz_conv} uses the $OM$ distribution $U_T^u$, $T>0$ that in each fiber has a distribution of frames, i.e. a distribution of orientations (red fiber density illustration). With zero curvature, parallel transport is path independent, and the distribution will be singular supported at a single element of each fiber. With curvature, the fiber distributions widen, possibly filling the fibers. For analytic manifolds, $U_T^u$ has smooth, positive density for any $T>0$ on a submanifold of $OM$.
        }
  \label{fig:horz_conv}
\end{figure}

For kernels with limited support, minimizing geodesics are in practice unique, and parallel transport can reasonably be performed along minimizing geodesics. This is generally the case for convolutional layers. However, the last layers of a network can be fully connected, and one thus cannot limit the support of the kernel. Instead, pooling over rotations can be performed to remove the rotational ambiguity of non-unique geodesics. In contrast, the convolution \eqref{eq:horz_conv} allows a fully connected layer to include information from the entire manifold while handling rotations in a principled way. In particular, the orientation distribution is continuous as a function of $u$ when $M$ is analytic as discussed below.

\subsection{Multilayer convolution as stochastic flow}
We now aim to improve the convolution \eqref{eq:horz_conv} to express it in stochastic terms, and to give it a natural behaviour when composing layers. For this, we need the notion of stochastic development and anti-development that expresses the relation between the processes $W_t$, $U_t$, and $X_t$. First, let us rewrite \eqref{eq:horz_conv} to avoid the split between the fiber integration $\mathbb P(dU_T^u)$ and the Euclidean integration $d\mathbf v$. We do that using the distribution $U_T^u$ directly:
\begin{equation}
  \begin{split}
  &k\ast_{U_T^u,\Log} f(u)
  =
  \\&\qquad=
  \int k(-u^{-1}\Log_x(\pi(U_T^u)))f(U_T^u)\mathbb P(dU_T^u)
  \\&\qquad=
  \E[k(-u^{-1}\Log_x(\pi(U_T^u)))f(U_T^u)]
  \end{split}
  \label{eq:Log_conv}
\end{equation}
where $\Log_x$ denotes the local inverse of the $\Exp_x$, and $\E$ the expectation with respect to the law of $U_T^u$. $\Log_x$ here provides pseudo-coordinates:
Because the integration is now over $OM$, we need to map to $\R^d$ to evaluate the kernel $k$. This again makes the integrand discontinuous because of the local nature of $\Log_x$ (the logarithm is discontinuous when crossing the cut locus). It turns out that this deficiency can be removed.

Recall the connection between the $\R^d$-valued Brownian motion $W_t$ in the Eells-Elworthy-Malliavin construction and the $M$-valued process $X_t=\pi(U_t)$. $X_t$ is denoted the \emph{stochastic development} of $W_t$, since $X_t$ is developed, or rolled-out, over $M$ following the stochastic increments $dW_t$ for each time $t$ using the current values of the process $U_t$ to map from $\R^d$ to $T_{X_t}M$. The reverse is also true: Any $M$-valued semimartingale can be \emph{anti-developed} to a semi-martingale on $\R^d$. Figure~\ref{fig:processes} illustrates the relation between sample paths $W_t(\omega)$ and the developments $U_t(\omega)$. Using this relation directly, we can define a convolution as
\begin{equation}
  k\ast_{W_T} f(u)
  =
  \int k(-W_T)f(U_T^u)\mathbb P(dW_t)
  =
  \E[k(-W_T)f(U_T^u)]
  \label{eq:antidev_conv}
\end{equation}
where the mapping from $W_t$ to $U_t^u$ by development is used, and the expectation on the right-hand side is with respect to the law of the Brownian motion $W_t$. Note that $k$ is evaluated on $W_T$ for a fixed $T$. This Euclidean random variable is in fact normally distributed. However, $f$ is evaluated on the $OM$-valued random variable $U_T^u$ that automatically includes directional information. In comparison with \eqref{eq:Log_conv}, $\Log_x$ is not used in \eqref{eq:antidev_conv}.

The convolution depends on the Brownian motion $W_t$ up until the evaluation time $T>0$. Varying $T$ will change the distribution of orientations over $M$: For $T$ large, all orientations will diffuse to be equally probable; in the limit $T\to 0$, the convolution \eqref{eq:dir_conv} is recovered because the $U_T^u$ measure concentrates around the points in each fiber that corresponds to parallel transport along geodesics from $x$ (see small-time asymptotic limit results in, e.g., \cite{hsu_stochastic_2002} and \cite{sommer_modelling_2017}).
\begin{remark}
  The $OM$ endpoint $U_T^u$ is dependent on the entire path $U_t^u$, $t\in[0,T]$: Let $\omega^1,\omega^2$ be two elements of $\Omega$ such that $W_T(\omega^1)=W_T(\omega^1)$. Then $U_T^u(\omega^1)$ does not necessarily equal $U_T^u(\omega^2)$. This is a consequence of curvature and reflects that development is a map from the path space $W([0,T],\R^d)$ to the path space $W([0,T],OM)$, i.e. the endpoint $U_T^u(\omega)$ is dependent on the entire path $W_t(\omega)$. The path spaces are Wiener spaces of continuous paths on $[0,T]$.
\end{remark}
\begin{figure}[t]
  \centering
  \def\svgwidth{1.\columnwidth}
  \import{./figures/}{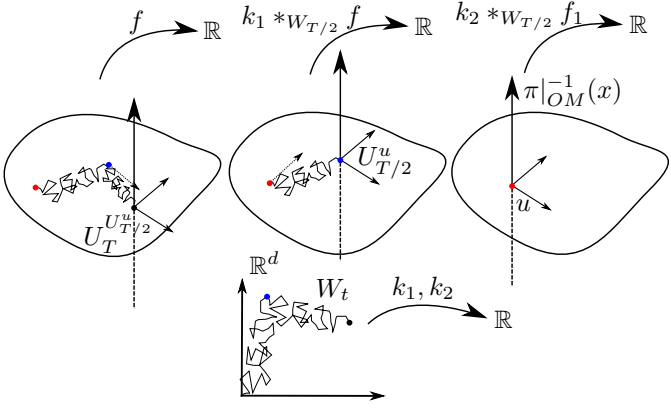} 
  \caption{
    The convolution \eqref{eq:antidev_conv} applies kernels, here $k_1,k_2$, on the antidevelopment $W_t$ whereas $f$ is applied on the $OM$ process $U_T^u$. Compare with Figure~\ref{fig:backwards}.
        }
  \label{fig:antidev_conv}
\end{figure}

Brownian motion or, equivalently, the heat flow is a semi-group which is often expressed in terms of the density: $p_{t+s}(y; x)=\int_M p_s(y; z)p_t(z; x)dz$. In other words, we can obtain $W_{t+s}$, $U_{t+s}^u$, and $X_{t+s}^x$ by starting the stochastic processes at $0$, $u$, and $x$ respectively, running the process to time $t$, and then restart the processes at $W_t$, $U_t^u$ and $X_t^x$ to obtain $W_{t+s}=W_s^{W_t}$, $U_{t+s}^u=U_s^{U_t^u}$, and $X_{t+s}^x=X_s^{X_t^x}$. We can use this and development to express compositions of convolution as one integral over the Brownian motion with the filters applied at discrete times. To see this, compose two $W_{T/2}$ convolution layers to get
\begin{equation}
  \begin{split}
    &k_2\ast_{W_{T/2}} (k_1\ast_{W_{T/2}} f)(u)
  \\&\qquad=
  \E[k_2(-W_{T/2})(k_1\ast_{W_{T/2}} f)(U_{T/2}^u)]
  \\&\qquad=
  \E[k_2(-W_{T/2})\E[k_1(-(W_T-W_{T/2}))f(U_T^{U_{T/2}^u})]]
  \\&\qquad=
  \E[k_2(-W_{T/2})k_1(-(W_T-W_{T/2}))f(U_T^u)]
  \end{split}
  \label{eq:two_conv}
\end{equation}
using the semigroup property for $W_t$ and $U_t$. Note that $W_T-W_{T/2}$ is Gaussian distributed with variance equal to $W_{T/2}$. Thus, with $n$ layers and filters $k_1,\dots k_n$,
\begin{equation*}
  \begin{split}
    &k_n\ast_{W_{T/n}} (k_{n-1}\ast_{W_{T/n}} \cdots (k_1\ast_{W_{T/n}} f))(u)
  \\&\qquad=
  \E[k_n(-W_{T/n})k_{n-1}(-(W_{2T/n}-W_{T/n}))\cdots \qquad
  \\&\qquad\qquad\qquad\qquad\quad\quad\ \ \ 
  k_1(-(W_T-W_{(n-1)T/n}))f(U_T^u)]
  \end{split}
\end{equation*}

For the evaluation of the output layer at $u$, the result of the convolution, with $f$ being the input function, the stochastic flow visits the layers evaluating $k_n$ at $t=T/n$ and $k_1$ at $t=T$. \emph{Forward} time of the processes $W_t$ and $U_t^u$ thus implies \emph{backwards} visits through the layers. All differences $W_{iT/n}-W_{(i-1)T/n}$ are normally distributed in $\R^d$. The base points $\pi(U_T^u)$ in $M$ follows the distribution of a Brownian motion started at $\pi(u)$ evaluated at $t=T$, and orientations are distributed in $OM$ by parallel translating along the stochastic paths $\pi(U_t)$ in $M$. The effect of the convolution can be seen by comparing Figure~\ref{fig:antidev_conv} with Figure~\ref{fig:backwards}.

\subsection{Properties}
\label{sec:conv_properties}
\emph{Tensor convolutions}:
When the convolution \eqref{eq:antidev_conv} appear in the $l$th layer of a multilayer network with multi-dimensional input and output, we can generalize the tensor convolution \eqref{eq:tensor_conv} by writing \eqref{eq:antidev_conv} in the form
\begin{align*}
  &y^n
  =\E[k^n(-W_t)f(U_T^u)]
  \\&\qquad
  =\E[k^n_1(-W_t)f^1(U_T^u)]+\cdots+\E[k^n_m(-W_t)f^m(U_T^u)]
\end{align*}
with a set of kernels $k^n_m$ being the entries of the kernel tensor $k$. The linearity of the expectation thus implies that convolution can be extended to tensor convolution similarly to the Euclidean case.

\emph{Equivariance}: Let $f:OM\to\R$ and $a\in\O(d)$. As mentioned previously, $a$ acts on the right on $f$ by $(a.f)(u)=f(a.u)$ (recall that $\GL(d)$ acts on $FM$ by right composition). Then
\begin{equation*}
  \begin{split}
  k\ast_{W_T} (a.f)(u)
  &=
  \E[k(-W_T)f(U_T^u\circ a)]
  =
  \E[k(-W_T)f(U_T^{u\circ a})]
  \\&=
  k\ast_{W_t^T} f(a.u)
  =
  a.(k\ast_{W_T} f)(u)
  \end{split}
\end{equation*}
because the parallel transport in $U_t^u$ acts on $u$ by composition on the left. The horizontal convolution is thus equivariant to the $\O(d)$ action on functions $OM\to\R$.

\emph{Smoothness}: When the finite bracket span of $H_uFM$, i.e. the span of iterated brackets $[[[H_{i_1}(u),H_{i_2}(u)],H_{i_3}(u)],\ldots,H_{i_r}(u)]$, $r\in\mathbb{N}$, $i_j=1,\dots,d$, generates a subspace of $T_uOM$ of constant rank as a function of $u$, there exists a submanifold of $OM$ on which $U_t^u$ has a smooth, positive density for all $t$ by the Frobenius theorem. In this case, the integrand $f(U_T^u)$ in the convolution inherits the smoothness of $f$. This is in contrast to the parallel transport of frames along minimizing geodesics where the minimizing geodesics shift discontinuously when crossing the cut locus. The constant rank condition is for instance satisfied for analytic manifolds \cite{montgomery_tour_2006}
and homogeneous spaces.

\emph{Nonlinearities}: With addition of layer-wise nonlinearities $\phi_i$, $i=1,\dots,n$, the full network takes the form
\begin{equation*}
  \begin{split}
    &\phi_n(k_n\ast_{W_{T/n}} \phi_{n-1}(\cdots \phi_1(k_1\ast_{W_{T/n}} f))(u)
  \\&\ =
  \phi_n(\E[k_n(-W_{T/n})\phi_{n-1}(\E[k_{n-1}(-(W_{2T/n}-W_{T/n}))\cdots \qquad
  \\&\ \qquad\qquad
  \phi_1(\E[k_1(-(W_T-W_{(n-1)T/n}))f(U_T^u)])])])
  \end{split}
\end{equation*}

\emph{Spatial pooling and Gaussian weighting}:
  There is an implicit Gaussian weighting in the integral in \eqref{eq:antidev_conv} since $W_T$ is normally distributed. This is in contrast to the most standard form of convolution where the integral is taken with respect to the Lebesgue measure on $\R^d$. This can be compensated for in reweighting the kernel, i.e. exchanging $k(x)$ with $k(x)/p_T(x)$ where $p_T$ is the density of the centered normal distribution in $\R^d$ with variance $T$. The use of the Brownian motion makes the construction related to the diffusion-convolution networks \cite{atwood_diffusion-convolutional_2016}, and the anisotropic heat flow used to construct patch operators in \cite{boscaini_learning_2016}. However, the focus here is on distributing orientations in $OM$ as opposed to constructing a density or defining patches on $M$.

  It is common practice to use a form of spatial pooling in convolutional networks. Average pooling is by construction convolution with a box kernel. With stride, it provides a coarsened version of the discretized output function similarly to max-pooling. The Gaussian weighting of the integral gives a similar effect when convolving the result of a convolution with an identity kernel: Letting $k_2(x)=1$ in \eqref{eq:two_conv}, we get $k_2\ast_{W_{T/2}} (k_1\ast_{W_{T/2}} f)(u)=\E[k_1(-(W_T-W_{T/2}))f(U_T^u)]$ where it can be seen that $f$ is evaluated at time $T$ of the Brownian motion whereas $k_1$ is evaluated at $W_T-W_{T/2}$ which has half the variance. This can be seen as a ``Gaussian stride'': $f$ is evaluated at points having twice the variance as the evaluation points of the kernel thus mimicking the way regular stride doubles the length scale on which the input function is evaluated. In the Euclidean situation, the result can be seen as exchanging the average filter in average pooling with a convolution of the output with a Gaussian kernel of larger width.

\subsection{Numerical implementation}
While the heat equation on manifolds is a nonlinear PDE, the heat kernel can be numerically computed efficiently on discretized surfaces. The vector heat method \cite{Sharp:2019:VHM} lifts this to transport in the tangent bundle. We expect these methods to be transferable to efficient numerical evaluation of the expectation in \eqref{eq:antidev_conv}. Though an actual implementation is left to future work, a sketch algorithm is listed in Algorithm~\ref{alg:conv}.
\begin{algorithm} \DontPrintSemicolon \SetAlgoLined
  1. Integrate the density $\rho:\R^d\times OM\to\R$ 
  $$\partial_t \rho_t^{(\mathbf 0,u)}=A^*\rho^{(\mathbf 0,u)}_t$$ from $t=0$ to $t=T$ with $\rho_0^{(\mathbf 0,u)}=\delta_{(\mathbf 0,u)}$ and $A$ being the generator for the $\R^d\times OM$-valued process $(W_t,U_t^u)$.
  \\
  2. Evaluate the integral $$\int_{\R^d\times OM} k(-\mathbf v)f(\tilde{u})\rho_T^{(0,u)}(\mathbf v,\tilde{u})d(\mathbf v,\tilde{u})$$
  \caption{Evaluation, convolution $k\ast_{W_T}f(u)$}
  \label{alg:conv}
\end{algorithm} 

On 2D surfaces where the fibers $\pi^{-1}(x)$ are isomorphic to the circle $\SO(1)$, $OM$ can be discretized by a triangulation of $M$ together with a division of $\SO(1)$ in a finite set of bins. The kernel $k$ will in practice also be discrete giving a natural discretization of $\R^d$. 
Importantly, the integration in step 1. of Algorithm~\ref{alg:conv} can be precomputed prior to training and prediction as only the evaluation of the integral (step 2.) depends on $f$ and $k$. Numerical integration of the density $\rho$ is therefor not needed at training or prediction time. In addition, the integration needs only be precomputed once for each $x\in M$ since equivariance implies that computing it for one $u\in\pi^{-1}(x)$ makes the value available for all elements of the fiber.



Examples of numerical implementation of stochastic horizontal transport, development and Monte Carlo approximation of $p_t(\cdot; x)$ can be found in the Theano Geometry framework \cite{kuhnel_differential_2019}.

\section{Sampling means for manifold valued convolutional filters}
We now switch focus and consider the situation of a manifold valued convolutional filter, i.e. $k\ast f$ takes values in $M$. The complexity here lies in the fact that there is no direct way to enforce the value of an integral to take values in a manifold. This problem has been the focus of intensive interest in geometric statistics, the statistical analysis of data in geometric spaces: Fr\'echet defined in \cite{frechet_les_1948} a generalization of the Euclidean expected valued as the Fr\'echet mean (FM), and this and related notions of manifold means have been treated in numerous works. Relevant for the present context is the introduction of weights and the weighted Fr\'echet mean (wFM) which in \cites{pennec_riemannian_2006,chakraborty_manifoldnet_2020,chakraborty_deep_2019} is used to define a generalization of the Euclidean convolution that takes values in a manifold.

Because both the Fr\'echet mean and the weighted Fr\'echet mean are posed as optimization problems -- minimizers of the (weighted) variance, they are typically expensive to compute, which is a major obstacle in deep learning applications. This issue can be handled in spaces where geodesics have closed form solution using an inductive estimator \cites{chakraborty_manifoldnet_2020,chakraborty_deep_2019}. Here, we take a different view on the estimation problem and propose a method for \emph{sampling} from a distribution centered around weighted means, thus removing the need for optimization steps to find shortest geodesics and a minimizer of the expected variance. For this, we introduce the \emph{weighted diffusion mean}, a version of mean value that is defined from a likelihood principle in contrast to the non-parametric definition of the Fr\'echet mean. We develop a novel sampling scheme in the $n$-fold product manifold $M^k$ for $k$ points by conditioning a stochastic process to hit the diagonal of $M^k$, identify the distribution of the resulting random variable, and relate the introduced stochasticity to other stochastic neural networks models.

\subsection{Background}
Euclidean convolution can be written $k\ast f(\mathbf x)=\E[k(\mathbf x-\mathbf z)f(\mathbf z)]$ with expectation with respect to the Lebesgue measure. Assuming $\E[k]=1$, the result can equivalently be expressed as 
$k\ast f(\mathbf x)=\argmin_{\mathbf y\in\R}\E[k(\mathbf x-\mathbf z)\|\mathbf y-f(\mathbf z)\|^2]$ where $\|\cdot\|^2$ denotes the squared Euclidean norm by differentiating at optimal $\mathbf y$, see e.g. \cite{goh_nonparametric_2011}. While the expected value does not have a manifold equivalent, the Riemannian generalization $\argmin_{y\in\M}\E[k(z)d(y,f(z))^2]$ of the variational formulation has solutions which are denoted \emph{weighted Fr\'echet means}, see e.g. \cite{lim_weighted_2014} (local minimizers are denoted weighted Karcher means). The weighted Fr\'echet mean is in \cites{pennec_riemannian_2006,chakraborty_manifoldnet_2020,chakraborty_deep_2019} used to define a manifold valued convolution operator: For $x_1,\dots,x_n\in M$ and weights $w_1,\dots,w_n\in\R$, the generalized convolution is $\argmin_{y\in M}\sum_{i=1}^Nw_id(y,x_i)^2$. This can be formulated in a continuous setting for $f:M\to M$, $x\in \M$ as $k\ast f(x)=\argmin_{y\in M}\E[k(x,z)d(y,f(z))^2)]$ where $k:M\times M\to\R$ is the kernel satisfying $\E[k(x,\cdot)]=1$ for each $x$.\footnote{\cite{chakraborty_manifoldnet_2020,chakraborty_deep_2019} define the convolution for an $M$-valued random variable $X$ by $\argmin_{y\in M}\E[k(X)d(y,X)^2)]$} 

Generally, computing the weighted Fr\'echet mean is expensive requiring solution of nested optimization problems: Each computation $d(y,x_i)^2$ for a candidate $y$ includes, for each $x_i$, solving an optimization problem to find the squared length of a minimizing geodesic on $M$, and this computation needs to be iterated in each step of an iterative, gradient based optimization to find an optimal $y$. This is clearly not adequate for deep learning applications. \cites{chakraborty_manifoldnet_2020,chakraborty_deep_2019} propose to use an inductive estimator that computes an estimate of the wFM by computation of $n-1$ geodesics between the candidate point and the input $x_i$. This computation is efficient in the cases where geodesics can be computed efficiently, e.g. in closed form. In this case, it is possible to backpropagate through the wFM estimation, and thereby to use the convolution layer in a standard deep network setup.

The Fr\'echet mean as used in geometric statistics has a cousin in the \emph{diffusion mean} (DM \cite{hansen_diffusion_2021,hansen_diffusion_2021-1}, also denoted \emph{Brownian motion maximum likelihood mean}, see e.g. \cite{said_extrinsic_2012,sommer_anisotropic_2015,sommer_modelling_2017}). This definition uses that the Euclidean expected value has an equivalent definition as the maximally likely center point of a normal distribution fitted to data: If $p_\theta:\R^d\to\R$ denotes the density of a normal distribution $\mathcal{N}(\mathbf y,\sigma^2)$ with parameter $\theta=(\mathbf y,\sigma^2)$, then the maximizer of the log-likelihood $\bar{\theta}=\argmax_\theta\E[\log p_\theta(\mathbf X)]$ for an $\R^d$-valued random variable $\mathbf X$ has the expected value in the $\mathbf y$-variable, i.e. $\bar{\mathbf y}=\E[\mathbf X]$. Since the normal distribution can be generalized to manifolds with the Riemannian Brownian motion (this is one among other generalizations, see \cite{pennec_intrinsic_2006}), an equivalent Riemannian definition of mean value is $\argmax_{y\in\M}\E[\log p_T(X; y)]$ where, as earlier on, $p_T(\cdot; y)$ denotes the solution of the heat flow started at $y\in M$, i.e. the density of the Riemannian Brownian motion, and $T$ takes the role of the variance $\sigma^2$. The interest in the DM lies in the natural incorporation of the curvature of $M$ for data with large spread. In low dimensions, $p_T(\cdot; y)$ can be approximated directly using spectral methods, whereas in high dimensions, $p_T(\cdot; y)$ can be approximated by sampling the Brownian motion conditioned on hitting the data, see e.g. \cite{jensen_simulation_2021} and below.

\subsection{Weighted diffusion mean}
We here propose a scheme related to the use of the wFM for manifold-valued convolution, but we exchange the wFM with a \emph{weighted} diffusion mean (wDM): In Euclidean space, the weighted mean equals the maximally likely center point $\mathbf y$ of independent samples $x_1,\dots,x_n$ from $n$ normal distributions $\mathcal{N}(\mathbf y,T/w_1),\dots,\mathcal{N}(\mathbf y,T/w_n)$. Again taking the Brownian motion as the manifold equivalent of the Euclidean normal distribution with density $p_{T/w_i}(\cdot; y)$, we here define the weighted diffusion mean wDM as $\argmax_y\sum_{i=1}^n\log p_{T/w_i}(x_i; y)$ (discrete version) and $\argmax_y\E[\log p_{T/k(z)}(z; y)]$ (continuous version). As we will see below, the probabilistic nature of the mean allows sampling from a distribution centered at the mean, thereby removing the need for optimization to find geodesics.

Similarly to the use of the wFM for convolution, the manifold-valued convolution using the wDM is here 
\begin{equation*}
  k\ast f(x)=\argmax_{y\in M}\E[\log p_{T/k(x,z)}(f(z);y)]
\end{equation*}
with $k:M\times M\to\R$ and $f:M\to M$.

\subsection{Bridge sampling for likelihood approximation}
\label{sec:bridge_sampling}
We now switch the roles of $y$ and the data $x_i$: In the Euclidean setting, we consider the probability of observing $\mathbf y$ in each of the $n$ distributions $\mathcal{N}(\mathbf x_i,T/w_i)$ simultaneously. The distribution of $\mathbf y$ is then $\mathcal{N}(\tfrac{\sum_{i=1}^nw_i\mathbf x_i}{\sum_{i=1}^nw_i},\tfrac{T}{\sum_{i=1}^nw_i})$, i.e. again a normal distribution however centered at the weighted mean. On manifolds, it is computationally difficult to compute the wDM directly similarly to the wFM, but it turns out we can sample from a manifold equivalent of the distribution of $\mathbf y$. We achieve this by sampling a conditioned Brownian motion in the $n$-fold product manifold $M^n$. Below, we first discuss sampling the conditioned distribution in the one sample situation ($n=1$, $w_1=1$) before moving on to the weighted case.

Let $y,v\in M$, and let $X_t^y$ denote the Brownian motion starting at $y$. The time $t=T$ conditioned process $X_t^x|X_T=v$, a Brownian bridge, has the property of hitting the target value $v$ at time $T$ a.s. Sampling of the conditioned process is often of interest because it can be used to approximate the heat kernel $p_T(v; y)$. That is, if we can sample a process that approximates $X_t^y|X_T=v$, we can approximate the heat kernel even in high dimensions were direct solution of the heat PDE is not applicable. In general, for an It\^o stochastic differential equation (SDE)
\begin{equation*}
  dx_t=
  b(x_t)dt
  +
  a(x_t)
  dW(t)
  \ ,
\end{equation*}
the corresponding bridge process hitting $v$ at time $T$ satisfies the SDE
\begin{equation}
  dx_t=
  b(x_t)dt
  +
  a(x_t)
  a(x_t)^T
  \nabla\log p_{T-t}(v; x_t)dt
  +
  a(x_t)
  dW(t)
  \ .
  \label{eq:bridge}
\end{equation}
Note the addition of the extra drift term that includes the gradient of the $\log$-density $\nabla\log p_{T-t}(v; x_t)$. In the present case, writing the Brownian motion in local coordinates, the drift of the SDE is $b(x)=-\frac12g(x)^{kl}\Gamma(x)\indices{_{kl}}$ and the diffusion coefficient $a(x)=\sqrt{g^{-1}(x)}$, i.e. the drift is a contraction over the Christoffel symbols, and the diffusion coefficient is a square root of the inverse of the Riemannian metric $g$. 

The bridge SDE \eqref{eq:bridge} is however not useful for computational purposes since we cannot expect to be able to compute $\nabla\log p_{T-t}(v; x_t)$ at each time step. Instead, to handle this fact, Delyon and Hu \cite{delyon_simulation_2006} introduced a guided approximation of the bridge process. This consist of an SDE
\begin{equation}
  dx_t=
  b(x_t)dt
  +
  \tilde{b}(x_t)dt
  +
  a(x_t)
  dW(t)
  \label{eq:approx_bridge}
\end{equation}
where the term $a(x_t)a(x_t)^T\nabla\log p_{T-t}(v; x_t)$ in \eqref{eq:bridge} is exchanged with a computationally feasible alternative, either $\tilde{b}(x)=\frac{v-x}{T-t}dt$, or, alternatively, $\tilde{b}(x)=a(x_t)a(x_t)^T\nabla\log \tilde{p}_{T-t}(v; x)$ where $\tilde{p}$ is a density of a simpler process with closed form transition density \cite{schauer_guided_2017}. Here, we follow the former approach as used in \cite{jensen_simulation_2021} to sample the Brownian motion. The transition density can then be written in the form 
\begin{equation*}
  p(T,v; y)
  =
  \sqrt{\frac{|(a(x)^{-1})^Ta(x)|}{(2\pi T)^d}}
  e^{\frac{-\|a(y)^{-1}(y-v)\|^2}{2T}}
  \E[\varphi(x_t)]
\end{equation*}
where $\varphi$ denotes a correction factor between the law of the true bridge and the law of the guided proposal process. Note that $p(T,v; y)$ is then a product of a term which is the density of the standard $\R^d$ normal distribution, and the expected correction $\E[\varphi(x_t)]$. The latter of these terms, which encodes the deviation from the Euclidean situation, can be approximated numerically by Monte Carlo sampling.
The guided proposal scheme is illustrated in Figure~\ref{fig:guided} (left). 
\begin{figure}[t]
  \centering
  \def\svgwidth{.48\columnwidth}
\begingroup%
  \makeatletter%
  \providecommand\color[2][]{%
    \errmessage{(Inkscape) Color is used for the text in Inkscape, but the package 'color.sty' is not loaded}%
    \renewcommand\color[2][]{}%
  }%
  \providecommand\transparent[1]{%
    \errmessage{(Inkscape) Transparency is used (non-zero) for the text in Inkscape, but the package 'transparent.sty' is not loaded}%
    \renewcommand\transparent[1]{}%
  }%
  \providecommand\rotatebox[2]{#2}%
  \newcommand*\fsize{\dimexpr\f@size pt\relax}%
  \newcommand*\lineheight[1]{\fontsize{\fsize}{#1\fsize}\selectfont}%
  \ifx\svgwidth\undefined%
    \setlength{\unitlength}{164.40364207bp}%
    \ifx\svgscale\undefined%
      \relax%
    \else%
      \setlength{\unitlength}{\unitlength * \real{\svgscale}}%
    \fi%
  \else%
    \setlength{\unitlength}{\svgwidth}%
  \fi%
  \global\let\svgwidth\undefined%
  \global\let\svgscale\undefined%
  \makeatother%
  \begin{picture}(1,0.74435127)%
    \lineheight{1}%
    \setlength\tabcolsep{0pt}%
    \put(0,0){\includegraphics[width=\unitlength]{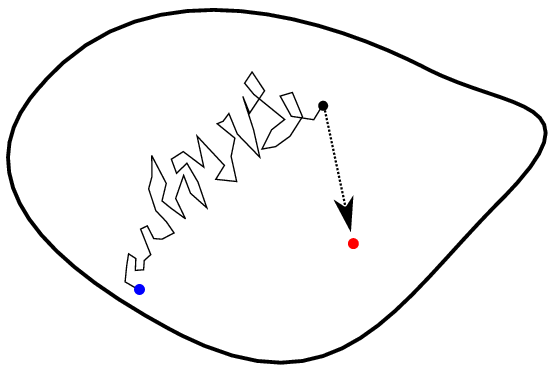}}%
    \put(0.14321685,0.61938974){\color[rgb]{0,0,0}\makebox(0,0)[lt]{\lineheight{1.25}\smash{\begin{tabular}[t]{l}$M$\end{tabular}}}}%
    \put(0.59500582,0.36749546){\color[rgb]{0,0,0}\makebox(0,0)[lt]{\lineheight{1.25}\smash{\begin{tabular}[t]{l}$v-x(t)$\end{tabular}}}}%
    \put(0.65426963,0.20587208){\color[rgb]{0,0,0}\makebox(0,0)[lt]{\lineheight{1.25}\smash{\begin{tabular}[t]{l}$v$\end{tabular}}}}%
    \put(0.25422543,0.45672096){\color[rgb]{0,0,0}\makebox(0,0)[lt]{\lineheight{1.25}\smash{\begin{tabular}[t]{l}$x(t)$\end{tabular}}}}%
  \end{picture}%
\endgroup%
 
  \def\svgwidth{.48\columnwidth}
\begingroup%
  \makeatletter%
  \providecommand\color[2][]{%
    \errmessage{(Inkscape) Color is used for the text in Inkscape, but the package 'color.sty' is not loaded}%
    \renewcommand\color[2][]{}%
  }%
  \providecommand\transparent[1]{%
    \errmessage{(Inkscape) Transparency is used (non-zero) for the text in Inkscape, but the package 'transparent.sty' is not loaded}%
    \renewcommand\transparent[1]{}%
  }%
  \providecommand\rotatebox[2]{#2}%
  \newcommand*\fsize{\dimexpr\f@size pt\relax}%
  \newcommand*\lineheight[1]{\fontsize{\fsize}{#1\fsize}\selectfont}%
  \ifx\svgwidth\undefined%
    \setlength{\unitlength}{164.40364207bp}%
    \ifx\svgscale\undefined%
      \relax%
    \else%
      \setlength{\unitlength}{\unitlength * \real{\svgscale}}%
    \fi%
  \else%
    \setlength{\unitlength}{\svgwidth}%
  \fi%
  \global\let\svgwidth\undefined%
  \global\let\svgscale\undefined%
  \makeatother%
  \begin{picture}(1,0.74435127)%
    \lineheight{1}%
    \setlength\tabcolsep{0pt}%
    \put(0,0){\includegraphics[width=\unitlength]{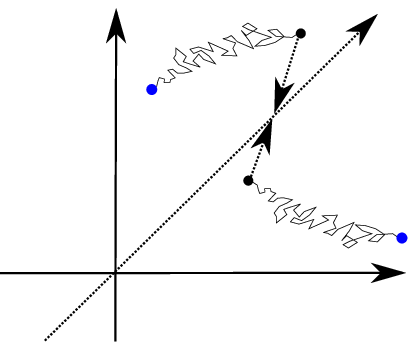}}%
    \put(0.18883632,0.61026585){\color[rgb]{0,0,0}\makebox(0,0)[lt]{\lineheight{1.25}\smash{\begin{tabular}[t]{l}$M$\end{tabular}}}}%
    \put(0.701143,0.01414415){\color[rgb]{0,0,0}\makebox(0,0)[lt]{\lineheight{1.25}\smash{\begin{tabular}[t]{l}$M$\end{tabular}}}}%
    \put(0.61398172,0.46054027){\color[rgb]{0,0,0}\makebox(0,0)[lt]{\lineheight{1.25}\smash{\begin{tabular}[t]{l}$\mathrm{diag}(M^2)$\end{tabular}}}}%
  \end{picture}%
\endgroup%
 
  \caption{
    (left) On $M$, the guided proposal scheme \eqref{eq:approx_bridge} forces the process to hit the target $v$ at time $T$ a.s. by adding the drift $\tilde{b}$ that is typically the difference $v-x(t)$ (in coordinates, dotted arrow in figure) scaled by the inverse remaining time $(T-t)^{-1}$. The difference between the law of the conditioned and the guided process is proportional to the factor $\varphi(x_t)$. (right) On the product $M^2$ (or $M^n$), we can apply the guidance scheme to force independent Brownian motions to hit each other at time $T$. This is done by adding a drift $\tilde{b}$ that forces the processes towards the diagonal by adding the differences to the weighted arithmetic mean (in coordinates) of the processes. The result is the $\diag(M^2)$ valued random variable $v$.
        }
  \label{fig:guided}
\end{figure}

\subsection{Sampling the wDM}
However, the goal here is not to estimate $p_T(v; y)$ but to sample a distribution approximating the wDM. We use the ideas above to turn the problem around in the following way: For observations $x_1,\dots,x_n$, let $M^n$ denote the product manifold with the product Riemannian metric. We then start a process $x_t=(x_{1,t},\dots,x_{n,t})$ at the point $(x_1,\dots,x_n)\in M^n$ and condition it on having equal components at time $t=T$, i.e. $x_{1,T}^n=\cdots=x_{n,T}^n$. That is, the process must hit the diagonal of the product space $M^n$ similarly to the simultaneous observation of $y$ in $n$ normal distributions in the Euclidean situation described at the start of section~\ref{sec:bridge_sampling}. The processes $x_{i,t}$ are independent Brownian motions with variance $T/w_i$. This gives the conditioned process
\begin{equation}
  (x_{1,t},\dots,x_{n,t})|x_{1,T}=\cdots=x_{n,T}
  \ .
  \label{eq:product_bridge}
\end{equation}
The conditioned process is analogous to the Brownian bridge except that we condition on a subspace in $M^n$ instead of a point in $M$. In essence, the process runs backwards from the observations to reach a point $v=x_{1,T}^n=\cdots=x_{n,T}$ in $M$. Due to the symmetry $p_{T/w_i}(y; x)=p_{T/w_i}(x; y)$ of the Brownian motion, and the independence of the individual Brownian motions $x_{i,t}$ on $M^n$, we have $p_{T/w_i}(v; (x_1,\dots,x_n))=\prod_{i=1}^np_{T/w_i}(v; x_i)=\prod_{i=1}^np_{T/w_i}(x_i; v)$, i.e. the probability of observing $v$ at the diagonal equals the probability of observing $x_1,\dots,x_n$ on $M$ regarding $v$ as a parameter.

Similarly to \eqref{eq:bridge}, the conditioned process has an SDE that depends on the (intractable) log-transition density. However, we can again construct a guided process \eqref{eq:approx_bridge} on the product manifold $M^n$ using a drift which in a coordinate chart reads $\tilde{b}(x_{1,t},\dots,x_{n,t})=((\mu(t)-x_{1,t})/w_1,\dots,(\mu(t)-x_{n,t})/w_n)/(T-t)$ with $\mu(t)=\tfrac{\sum_{i=1}^nw_ix_i(t)}{\sum_{i=1}^nw_i}$. The scheme is illustrated in Figure~\ref{fig:guided} (right). We let $\varphi$ denote the correction factor as above.

We then obtain the following result.
\begin{theorem}
  Let $x_t=(x_{1,t},\dots,x_{t,n})$ consist of $n$ independent Brownian motions on $M$ with variance $T/w_i$, and let $\tilde{x}_t$ be the process with additional added drift $\tilde{b}$. Let $\mathbb P$ be the law of $\tilde{x}_t$; $\mathbb P^*$ the law of the conditioned process \eqref{eq:product_bridge}; and $\varphi$ the correction factor of the guided process as in \eqref{eq:approx_bridge}. Let $v(x_1,w_1,\dots,x_n,w_n)$ be the random variable $\tilde{x}_{1,T}$ with law $\tfrac{\varphi}{\E[\varphi]} \mathbb P$. Then $v$ has a density $p_v(y)\propto \prod_{i=1}^np_{T/w_i}(x_i; y)$ and $v=\tilde{x}_{i,T}$ for all $i$ a.s.
  \label{thm:product_processes}
\end{theorem}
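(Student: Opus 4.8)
The plan is to recognize the claimed statement as two separate assertions that are proved by different mechanisms: first, the a.s.\ identity $v = \tilde x_{i,T}$ for all $i$, which is a consequence of the guided-proposal construction forcing the process onto the diagonal of $M^n$ at time $T$; and second, the density identity $p_v(y) \propto \prod_{i=1}^n p_{T/w_i}(x_i; y)$, which follows from a Doob $h$-transform / change-of-measure argument on path space combined with the heat-kernel symmetry $p_{T/w_i}(v; x_i) = p_{T/w_i}(x_i; v)$ already noted in the text.

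First I would handle the a.s.\ diagonal-hitting claim. The guided proposal $\tilde x_t$ on $M^n$ carries the extra drift $\tilde b(x_{1,t},\dots,x_{n,t}) = \big((\mu(t)-x_{1,t})/w_1,\dots,(\mu(t)-x_{n,t})/w_n\big)/(T-t)$ in a coordinate chart, exactly of the Delyon--Hu type \eqref{eq:approx_bridge} with target the diagonal. I would argue, as in \cite{delyon_simulation_2006} and \cite{jensen_simulation_2021}, that the $(T-t)^{-1}$ singularity in the drift pulls each coordinate toward the running weighted mean $\mu(t)$ fast enough that the pairwise differences $x_{i,t}-x_{j,t}$ converge to $0$ as $t\to T$ almost surely; standard estimates show the guided bridge is absolutely continuous with respect to the true conditioned bridge on $[0,T)$ and both extend continuously to $t=T$ with value on $\diag(M^n)$. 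Hence $v := \tilde x_{1,T} = \tilde x_{i,T}$ for every $i$, a.s.\ under $\mathbb P$, and therefore also under the equivalent measure $\tfrac{\varphi}{\E[\varphi]}\mathbb P$.

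Next I would identify the density of $v$. The key is that $\tfrac{\varphi}{\E[\varphi]}\mathbb P$ is, by the Delyon--Hu theorem and its product-manifold extension, exactly the law $\mathbb P^*$ of the conditioned process \eqref{eq:product_bridge}: the correction factor $\varphi$ is precisely the Radon--Nikodym derivative (up to normalization) between the guided proposal and the true bridge, so reweighting $\mathbb P$ by $\varphi/\E[\varphi]$ recovers $\mathbb P^*$. Under $\mathbb P^*$, the time-$T$ value is the conditioned hitting point of the diagonal. I would then compute its density by disintegration: starting $n$ independent Brownian motions at $(x_1,\dots,x_n)$ and conditioning on $x_{1,T}=\cdots=x_{n,T}=y$, the conditional density of the common value $y$ is proportional to the joint transition density evaluated on the diagonal, namely $\prod_{i=1}^n p_{T/w_i}(y; x_i)$, which by the heat-kernel symmetry equals $\prod_{i=1}^n p_{T/w_i}(x_i; y)$. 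Normalizing gives $p_v(y)\propto\prod_{i=1}^n p_{T/w_i}(x_i; y)$, as claimed, and this is (up to normalization) the likelihood whose maximizer is the wDM.

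I expect the main obstacle to be the rigorous justification of the change of measure on the product manifold: the Delyon--Hu guided-bridge theory was developed for conditioning on a \emph{point}, whereas here we condition on the diagonal submanifold $\diag(M^n)\subset M^n$, so one must verify that $\varphi$ is integrable, that $\E[\varphi]>0$, and that the guided drift indeed yields absolute continuity with respect to the conditioned law in this codimension-$(n-1)d$ setting, together with the weighted variances $T/w_i$ entering the metric. Handling the $(T-t)^{-1}$ blow-up of the drift near $t=T$ — ensuring the stochastic exponential defining $\varphi$ is a genuine (uniformly integrable) martingale and that no mass is lost in the limit — is the technically delicate point; the rest is bookkeeping with the heat-kernel symmetry and the product structure.
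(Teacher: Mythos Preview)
Your proposal is correct and follows essentially the same route as the paper's proof outline: both arguments identify $\tfrac{\varphi}{\E[\varphi]}\mathbb P$ with $\mathbb P^*$ via the Delyon--Hu change of measure, use independence plus heat-kernel symmetry to obtain the product density, and argue that the singular drift forces the guided process onto the diagonal at $t=T$. The one point you flag as the main obstacle---extending the guided-bridge theory from conditioning on a point to conditioning on the diagonal submanifold---is exactly what the paper resolves by appealing to \cite{thompson_brownian_2016}, and the paper explicitly defers the remaining rigorous details to future work.
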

\begin{proof}[Proof outline]
  That the construction of \cite{delyon_simulation_2006} extends to conditioning on subspaces is shown in \cite{thompson_brownian_2016}.
  The distribution of $v$ with respect to the probability measure $\frac{\varphi}{\E[\varphi]} \mathbb P$ equals the distribution of $x_{1,T}^n$ with respect to $\mathbb P^*$. Because the processes $x_{i,t}$ are independent, $p_v(y)\propto \prod_{i=1}^np_{T/w_i}(x_i; y)$. The differences $\mu(T)-x_{i,T}$ are $0$ similarly to the case of \cite{delyon_simulation_2006,thompson_brownian_2016} showing that $x_{i,T}=x_{j,T}$ for all $i,j=1,\dots,n$.
   
\end{proof}
As a consequence, the Euclidean situation with normally distributed $y$ persists in the manifold situation: The random variable $v$ arise from observing the same value $v$ in $n$ independent Brownian motions. The weighting appears as a scaling of the variances of the individual processes. The proof as outlined here omits details. 
A rigorous argument will be presented in a future paper. 

We can now sample an approximation of the wDM by accounting for $\varphi$ with the following sampling importance resampler (SIR, Algorithm \ref{alg:inference}).
\begin{algorithm} \DontPrintSemicolon \SetAlgoLined
  sample $J$ paths from the guided process $\tilde{x}_t^j$\\
   compute correction factors $\varphi^1,\dots,\varphi^J$\\
   sample $j$ from $1,\dots,J$ with probability $\{\varphi^j/\sum_{j=1}^J\varphi^j\}$\\
   return $v=\tilde{x}_{j,T}$\\
  \caption{wDM estimation by SIR}
  \label{alg:inference}
\end{algorithm}
The algorithm as listed is written in coordinates assuming relevant charts. Alternatively, if $M$ is embedded as a subset of  $\R^k$, $\R^k$ coordinates can be used. The algorithm requires the computation of the Christoffel symbols in the integration of $x_t$ as is required for the numerical integration of geodesics. However, importantly, it removes any need for nested iterative optimization as is used for the wFM in cases where geodesics do not have closed form solutions. Changes to the weights $w_i$ affect the sample paths $x_t^j$ and corrections $\varphi^j$. The coupling between $w_i$ and $\varphi$ is however only through interaction between the guidance term and the Christoffel symbols. In practice, this can be ignored allowing backpropagation through the algorithm.

Note that because the wDM approaches the wFM when $T$ is small, samples from Algorithm~\ref{alg:inference} will in this case approach the wFM. However, small $T$ will affect the probability of the samples ($\varphi$ will tend to zero) because any deviance of the guided process from a geodesic will be less likely. Nonzero $T$ can thus be seen as a way to get computational efficiency at the cost of variance in the estimator.
Conversely, stochasticity can be reduced by lowering the evaluation time $T$ of the Brownian motion. This will reduce the variance of $v$, and result in the wDM approaching the wFM. However, this will require larger $J$ in algorithm~\ref{alg:inference} and thus increase in computational cost.

\subsection{Stochastic NN outputs}
While the stochasticity of the wDM estimator adds randomness to an otherwise deterministic setup, the added stochasticity is rather natural. For example, the deep Gaussian process model employed in \cite{gal_dropout_2016} when using dropout for uncertainty quantification uses the data conditional distribution $\mathbf y|\mathbf x,w=\mathcal{N}(\hat{y}(\mathbf x,w),\tau^{-1})$ for the output $\mathbf y$ given the input $\mathbf x$, weights $w$, deterministic neural network output $\hat{\mathbf y}(\mathbf x,w)$, and precision parameter $\tau$. Here, we get the same distribution of $v$ with $\tau^{-1}=T/\sum_{i=1}^nw_i$. $\sum_{i=1}^nw_i/T$ can therefore be regarded a precision parameter for the model. Comparing to Euclidean networks, because the stochasticity is built into the convolution operator, stochasticity is here added before application of a nonlinearity, while the output in \cite{gal_dropout_2016} is normally distributed after application of nonlinearities in $\hat{\mathbf y}(\mathbf x,\mathbf w)$. Similarly to the Monte Carlo sampling of the moments of the predictive distribution in \cite{gal_dropout_2016}, Algorithm~\ref{alg:inference} can be used to estimate the moments of the output by using the correction factors $\varphi^j$ as importance sampling weights.

\section{Conclusion and outlook}


The paper concerned the application of fiber bundle geometry and methods from stochastic analysis on manifolds in geometric deep learning in two cases: convolution with manifold domain, and convolution with manifold target. We showed how horizontal flows in the frame bundle provides a direct way of quantifying the role of curvature in the non-commutativity of the convolution when using parallel transport along minimizing geodesics. We then used this insight, the stochastic flows in the Eells-Elworthy-Mallivin construction of Brownian motion, and stochastic development, to define a new convolution operator that in a natural way constructs a distribution of orientations globally on the manifold. The anti-development of the Brownian motion allows kernels on $\R^d$ to be applied in a seamless way. In addition, the distribution of orientations in the frame bundle allows evaluation of fully connected layers that incorporates global information over the manifold without pooling over orientations.

In the second part of the paper, we showed how the weighted diffusion mean can be used to define a convolution that takes values in a manifold. By conditioning a stochastic process in the $n$-fold product space $M^n$, we obtain a method for sampling from a distribution that centers at the wDM. This removes the need for nested iterative optimization for computing the weighted Fr\'echet in cases where geodesics do not have closed form solution, and thereby allows the convolution operator to be applied on a much more general class of manifolds.

We briefly discussed computational aspects and algorithms, however, we here focused on introducing the theoretical constructions and foundations for applying nonlinear stochastic methods in geometric deep learning. We hope that this will inspire further developments in the field, both in its theoretical foundation and in development of efficient algorithms.

\ifCLASSOPTIONcompsoc
  \section*{Acknowledgments}
\else
  \section*{Acknowledgment}
\fi

The work presented in this article is supported by the CSGB Centre for Stochastic Geometry and Advanced Bioimaging funded by a grant from the Villum foundation, 
the Villum Foundation grant 00022924, and the Novo Nordisk Foundation grant NNF18OC0052000.

\ifCLASSOPTIONcaptionsoff
  \newpage
\fi



\bibliographystyle{IEEEtran}
\bibliography{library}
%
%
%

\end{document}